\theoremstyle{plain}
\newtheorem{theorem}{Theorem}[section]
\newtheorem{lemma}[theorem]{Lemma}
\newtheorem{corollary}[theorem]{Corollary}
\theoremstyle{definition}
\newtheorem{definition}[theorem]{Definition}
\theoremstyle{remark}
\newcommand{\Set}[1]{\mathcal #1}
\newcommand{\R}{{\mathbb R}}%
\newcommand{\E}{{\mathbb E}}%
\icmltitlerunning{To the Max: Reinventing Reward in Reinforcement Learning}
\begin{document}

\twocolumn[
\icmltitle{To the Max: Reinventing Reward in Reinforcement Learning}
\icmlsetsymbol{equal}{*}

\begin{icmlauthorlist}
\icmlauthor{Grigorii Veviurko}{delft}
\icmlauthor{Wendelin B\"ohmer}{delft}
\icmlauthor{Mathijs de Weerdt}{delft}
\end{icmlauthorlist}

\icmlaffiliation{delft}{Delft University of Technology}
\icmlcorrespondingauthor{Grigorii Veviurko}{g.veviurko@tudelft.nl}

\icmlkeywords{Reinforcement Learning, Machine Learning, ICML}

\vskip 0.3in
]



\printAffiliationsAndNotice{} 

\begin{abstract}
In reinforcement learning (RL), different reward functions can define the same optimal policy but result in drastically different learning performance. For some, the agent gets stuck with a suboptimal behavior, and for others, it solves the task efficiently. Choosing a good reward function is hence an extremely important yet challenging problem. 
In this paper, we explore an alternative approach for using rewards for learning. We introduce \textit{max-reward RL}, where an agent optimizes the maximum rather than the cumulative reward. 
Unlike earlier works, our approach works for deterministic and stochastic environments and can be easily combined with state-of-the-art RL algorithms. In the experiments, we study the performance of max-reward RL algorithms in two goal-reaching environments from Gymnasium-Robotics and demonstrate its benefits over standard RL. The code is available at \url{https://github.com/veviurko/To-the-Max}.
\end{abstract}

\section{Introduction}
Reinforcement Learning (RL) is a learning paradigm where an intelligent agent solves sequential decision-making problems through trial and error. The main objective that an RL agent learns to optimize is the \textit{cumulative return}, i.e., a discounted sum of the \textit{rewards}. This makes the reward a crucial element of the problem, as it defines the optimal decision-making policy that the agent will try to learn.  

It is well known \cite{ng1999policy} that there are infinitely many ways to define the reward function under which a desired policy is optimal. Practically, however, these rewards often result in drastically different learning processes.
For example, many major successes of RL required meticulous engineering of the reward: by hand \cite{berner2019dota} or by learning it from a human example \cite{vinyals2019grandmaster}. Hence, designing a reward function that enables learning and corresponds to a certain optimal policy is a challenging problem in modern reinforcement learning.  

In many RL problems, the true reward function is \textit{sparse}, i.e., only successful completion of the task is rewarded. In particular, the sparse reward is characteristic to \textit{goal-reaching} problems where the agent needs to enter the goal state \cite{plappert2018multi, florensa2018automatic, ghosh2020learning}. Sparse reward problems are notoriously hard to solve with standard RL.
A popular and simple solution is to introduce a dense surrogate reward that represents some sort of distance between the agent and the goal \cite{gymnasium2023github, gymnasium_robotics2023github}.
However, this approach is very sensitive and should be carefully tailored to each problem individually, in order to not change the induced optimal policy. {Specifically, this dense artificial dense reward should \textit{a)} increase when the agent gets closer to the goal, and \textit{b)} not distract the agent from the reaching the goal. Designing a function that satisfies both criteria can be tricky for a human expert, as it requires estimating the (discounted) cumulative returns in various states.}

In this work, we propose \textit{max-reward RL}, where the agent optimizes the maximum reward achieved in the episode rather than the cumulative return. {This paradigms makes the reward design process much more intuitive and straightforward, as it only requires that ``better'' states correspond to larger rewards. Hence, as long as the goal-reaching action has the highest reward, the optimal policy does not change. Besides simplifying the reward design, the maximum reward objective can also be easier to optimize for. In standard RL, learning a value of a non-terminal state involves bootstrapping, and hence has a moving target. In max-reward RL, bootstrapping does not happen when the immediate reward is not smaller than the largest reward explored so far. Therefore, max-reward RL bootstraps less and hence, potentially, learns better.}

One of the key properties of the cumulative return is that it satisfies the Bellman equation \cite{bellman1954some} and hence can be efficiently approximated and optimized by iteratively applying the Bellman operator. To make the max-reward RL approach viable, an analogous learning rule is required. However, \citet{cui2023reinforcement} prove that naively changing summation into a max operator in the standard Bellman update rule works \textit{only} in a deterministic setting and hence cannot be used in most RL problems and algorithms.

Inspired by results from stochastic optimal control theory \cite{Kroner2018-lo}, this paper introduces a theoretically justified framework for max-reward RL in the general stochastic setting. We introduce a Bellman-like equation, prove the stochastic and deterministic policy gradient theorems, and reformulate some of the state-of-the-art algorithms (PPO, TD3) for the max-reward case. Using the Maze environment \cite{gymnasium_robotics2023github} with different surrogate dense rewards, we experimentally demonstrate that max-reward algorithms outperform their cumulative counterparts. Finally, experiments with a challenging Fetch environment \cite{gymnasium_robotics2023github} show the promise of max-reward RL in more realistic goal-reaching problems.

\section{Related work}
The first attempt to formulate max-reward RL was made by \citet{quah2006}, where the authors derived a learning rule for the maximum reward state-action value function.
However, as it was shown later \cite{gottipati2020maximum}, that work made a technical error of interchanging expectation and maximum operators. \citet{gottipati2020maximum} corrected this error, but the value functions learned via their approach differ from the expected maximum reward if stochasticity is present.
Independently, \citet{wang2020planning} derived a similar method in the context of planning in deterministic Markov Decision Processes (MDPs).
Later, \citet{cui2023reinforcement} demonstrated that the presence of stochasticity poses a problem not only for the max-reward RL but also for other non-cumulative rewards. 

There exists a parallel branch of research that (re)discovered maximum reward value functions in the context of safe RL for reach-avoid problems \cite{fisac2014reachavoid}. In their work, \citet{fisac2019bridging} considered a deterministic open-loop dynamic system, where the agent's goal is to avoid constraint violations. The authors derived a contraction operator, similar to the one by \citet{gottipati2020maximum}, to learn the max-cost safe value function. 
\citet{hsu2021safelive} extended this approach to reach-avoid problems, where the goal is to reach the goal while not violating constraints.
Later, max-cost value functions were utilized within the safe RL context to learn the best-performing policy that does not violate the constraints \cite{yu2022reach}.
The main limitation of the three aforementioned works is the same as for \citet{gottipati2020maximum} -- their methods only apply to deterministic environments and policies.

Effective reward design is a long-standing challenge in reinforcement learning which dates back to at least as early as 1994 \cite{mataric1994reward}. In this paragraph, we briefly summarize the existing work related to the reward design problem. For further reading, we refer the reader to \citet{eschmann2021reward}. Some of the big successes of RL utilize a hand-designed reward function, e.g., in the game of DOTA \cite{berner2019dota} or robots playing soccer \cite{haarnoja2023learning}. However, manually designed rewards often lead to undesirable behavior \cite{deepmindblog}.
Alternatively, the reward can be designed in an automated fashion. For example, based on state novelty to encourage exploration \cite{tang2017exploration, pathak2017curiositydriven, burda2018exploration}, by learning it from the experiences \cite{trott2019keeping}, or by using human data \cite{ibarz2018reward}.

To conclude, reward design and reward shaping remain challenging topics. In this work, we propose a new way to think about the reward -- the max-reward RL framework. While self-sufficient in some cases, this approach can also be combined with various existing methods for reward design.

\section{Background}
\begin{figure*}[t!]
\vskip 0.2in
\begin{center}
\includegraphics[width=1.3\columnwidth]{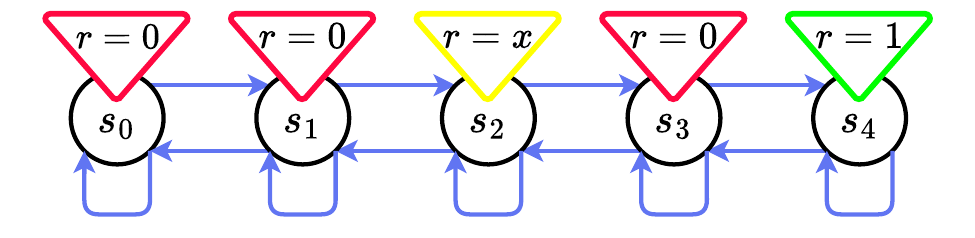}
\includegraphics[width=1.6\columnwidth]{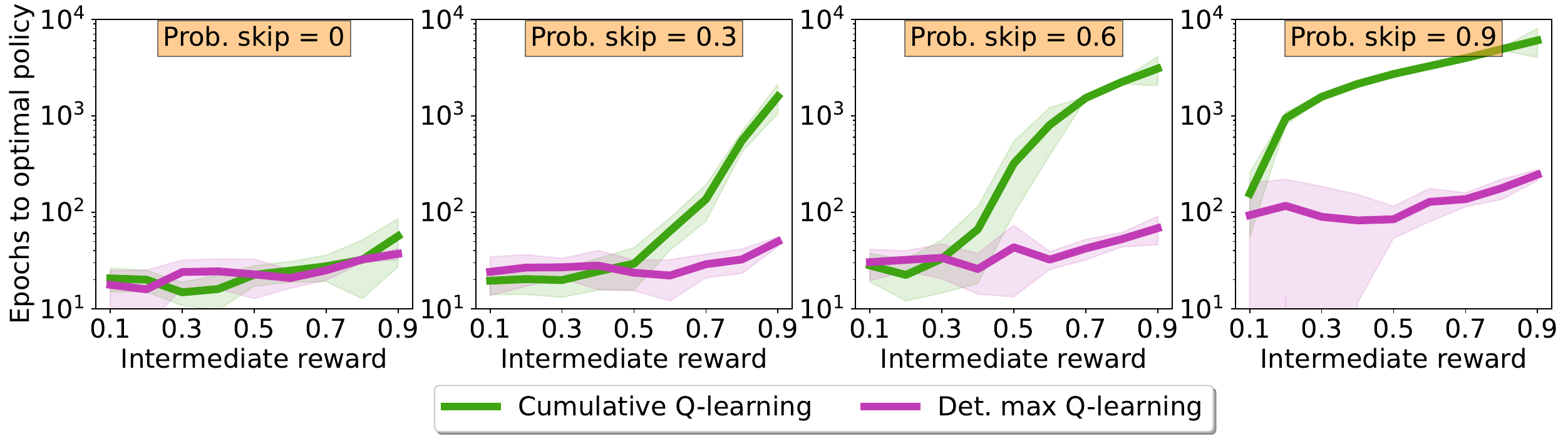}
\caption{Five-state chain MDP {with three actions (\textit{left, stay, right}) available in each state} and the training results for cumulative (in green) and max-reward (in violet) value iteration. The $y-$ axis is the number of training epochs to recover the optimal policy; the $x-$axis shows the values of the intermediate reward $x.$ Four panels correspond to different probabilities of skipping transitions into $s_4$ during training.}
\label{fig:chain-env}
\end{center}
\vskip -0.1in
\end{figure*}
We consider a standard reinforcement learning setup for continuous environments. An agent interacts with an MDP defined by a tuple $(\mathcal{S}, \mathcal{A}, R, P, p_0, \gamma),$ where $\mathcal{S}$ is the continuous \textit{state space}, $\mathcal{A}$ is the continuous \textit{action space}, and ${R:\mathcal{S}\times\mathcal{A}\times\mathcal{S}\to[0, \bar{R}]}$ is a non-negative and bounded \textit{reward function}.\footnote{Non-negativity of reward can be achieved in any MDP with bounded reward function.}
For each state-action pair, $(s,a)\in\mathcal{S}\times\mathcal{A}$, the transition function $P(\cdot|s, a)\in\mathcal{P}(\mathcal{S})$ is a probability density function (PDF) of the next state $s'$ and $p_0(\cdot)\in\Set{P}(\Set{S})$ is the PDF of the initial state $s_0$. Scalar $0\leq\gamma<1$ is the \textit{discount factor}. We use $\pi:\mathcal{S}\to\mathcal{P}(\mathcal{A})$ to denote a stochastic policy and $\mu:\mathcal{S}\to\mathcal{A}$ to denote a deterministic policy.
The time is discrete and starts at zero, i.e., $t\in\mathbb{N}\cup\{0\}$. For each timestep $t$, the state is denoted by $s_t$, the action by $a_t$, and the reward by $r_{t+1}:=R(s_t,a_t,s_{t+1}).$
Everywhere in the text, the expectation over policy, $\E_\pi$, denotes the expectation over the joint distribution of $s_t,a_t,r_{t+1}$ for $t\in\mathbb{N}\cup\{0\}$ induced by $\pi, P$, and $p_0.$ Sometimes, we use such notation as $\E_{x\sim\pi}$ (or just $\E_x$) to emphasize that the expectation is taken only over $x$.

In standard RL, the main quantity being optimized is the \textit{cumulative return}, defined as $G_t = \sum_{i=0}^\infty \gamma^ir_{t+1+i}$.
To maximize $\mathbb{E}_\pi\big[G_t\big]$, most RL algorithms learn state and/or state-action value functions defined as follows:
\begin{equation*}
    v^\pi(s)=\mathbb{E}_{\pi}\big[G_t \big|\substack{s_t=s}\big] \,,
    \quad 
    v^\ast(s)=\max_{\pi}v^\pi(s).
\end{equation*}
\begin{equation*}
    q^\pi(s, a)=\mathbb{E}_{\pi}\big[G_t \big|\substack{s_t=s\\a_t=a}\big] \,,
    \quad 
    q^\ast(s, a)=\max_{\pi}q^\pi(s, a).
\end{equation*}

Crucially, these functions are solutions to the corresponding \textit{Bellman equations}:
\begin{equation*}
v^\pi(s) = {\E}_{\substack{a_t\\s_{t+1}}}\big[r_{t+1} + \gamma v^\pi(s_{t+1})\big|\substack{s_t=s} \big]
\end{equation*}
\begin{equation*}
q^\pi(s, a) = {\E}_{\substack{s_{t+1}\\a_{t+1}}}
\big[r_{t+1} + \gamma q^\pi(s_{t+1}, a_{t+1})\big|\substack{s_t=s\\a_t=a} \big]
\end{equation*}
\begin{equation*}
q^\ast(s, a) = 
{\E}_{{s_{t+1}}}
\big[r_{t+1} + \gamma \max_{a'}q^\ast(s_{t+1},a') \big|\substack{s_t=s\\a_t=a} \big]
\end{equation*}
The defining feature of these equations is that they can be solved by repeatedly applying \textit{Bellman operators.} These operators are contractions and hence each of them has a unique fixed point that corresponds to one of the value functions above.
For example, the optimal state-action value function $q^\ast(s, a)$ is the fixed point of the \textit{Bellman optimallity operator} $\mathcal{T}^\ast$:
\begin{equation}
    \label{eq:bellman}
    \big(\mathcal{T}^\ast q\big)(s,a)= \mathbb{E}_{s_{t+1}}\big[r_{t+1} + \gamma \max_{a'}q(s_{t+1}, a')\big| \substack{\\s_t=s\\a_t=a} \big]
\end{equation}
The Bellman equation is foundational for all state-of-the-art RL algorithms as it allows training neural networks to approximate value functions. Therefore, for the max-reward framework to be useable, it is necessary to derive an analog of the Bellman equation. Below, we describe such an attempt made by \citet{gottipati2020maximum} and demonstrate that it is limited to purely deterministic problems.

\subsection{Deterministic max-reward RL}
\label{sec:det-max-r}
Instead of cumulative return, max-reward RL aims at optimizing the \textit{max-reward return}:
\begin{equation}
\label{eq:G_hat}
    \hat{G}_{t}= \max \big\{r_{t+1}, \gamma r_{t+2}, \gamma^2r_{t+3}\ldots\big\}
\end{equation}
{Similarly to cumulative returns, $\hat{G}_t$ uses the discount factor $\gamma$ which is necessary for learning with Bellman-like updates, as we show later}. To approximate $\mathbb{E}_\pi \big[\hat{G}_t\big]$, \citet{gottipati2020maximum} introduced the following definition of the state-action value functions:
\begin{equation*}
    \hat{q}^\pi_{det}(s,a)=\mathbb{E}_{\substack{s_{t+1}\\a_{t+1}}}\Big[
        r_{t+1} \lor \gamma q(s_{t+1}, a_{t+1})\Big|\substack{s_t=s\\a_t=a} \Big]
\end{equation*}
\begin{equation*}
    \hat{q}^\ast_{det}(s,a)=\mathbb{E}_{s_{t+1}}\Big[
        r_{t+1} \lor \gamma \max_{a'}q(s_{t+1}, a')\Big|\substack{s_t=s\\a_t=a} \Big]
\end{equation*}
where $\lor$ denotes the binary $\max$ operator, i.e., ${a\lor b:=\max\{a, b\}.}$ By construction, $\hat{q}^\ast_{det}$ and  $\hat{q}^\pi_{det}$ satisfy Bellman-like recursive equations. In their work, \citet{gottipati2020maximum} proved that the following operator is a contraction:
\begin{equation}
    \big(\hat{\mathcal{T}}^\ast_{det}q\big)(s, a) = 
    \mathop{\E}_{{s_{t+1}}}\big[r_{t+1} \lor
    \gamma \max_{a'}q(s_{t+1},a')\big| \substack{s_t=s\\a_t=a}\big]
    \label{eq:bellman-det}
\end{equation}
Therefore, $\hat{q}^\ast_{det}$ is the unique fixed point of $\hat{\Set{T}}_{det}^\ast$ and can be learned, e.g., with Q-learning.

\paragraph{Chain environment example.} Before going into the limitations of the approach above, we conduct a simple experiment to motivate the use of max-reward reinforcement learning. We show that max-reward RL is a better approach in a goal-reaching problem where the agent needs to learn to reach the goal state. Specifically, it dominates the standard cumulative RL when transitions into the goal state occur infrequently in the training data, which is often the case in larger-scale goal-reaching problems. 

Consider the five-state chain environment in Figure \ref{fig:chain-env}. Transitions leading into $s_4$ have reward of $1,$ transitions into $s_2$ have a reward parametrized with $x\in(0,1)$, and other rewards are zero. Hence, the optimal policy, concerning both max-reward and cumulative returns, is to go to $s_4$ and stay there.
We run tabular $Q-$value iteration algorithm using standard (Eq.~\eqref{eq:bellman}) and max-reward (Eq.~\eqref{eq:bellman-det}) Bellman operators for different values of the intermediate reward $x.$ In each training epoch, we iterate over all possible transitions. For each transition, we compute the target value using one of the Bellman operators and update the Q-table. Crucially, we \emph{randomly skip some of the transitions into $s_4$ with a certain probability}. In the experiment, we consider four values for the skip probability -- $p_{skip}\in\{0, 0.3, 0.6, 0.9\}.$ During training, when a transition into $s_4$ is sampled, the Q-table is updated with probability $1-p_{skip}$ and otherwise left unchanged. Transitions into other states are never skipped. In this way, we can control how often the agent is exposed to the transitions into the optimal state and thereby simulate problems where goal-reaching transitions are rarely encountered.

The results in Figure \ref{fig:chain-env} indicate that for larger values of the skip probability, the max-reward approach converges to the optimal policy significantly faster than the cumulative approach.
We believe that this phenomenon can be explained by differences in bootstrapping. In standard RL, the target for the $q-$value is a sum of the immediate reward and the $q-$value at the next timestep. Therefore, this target changes in each epoch until convergence. In the max-reward case, on the other hand, the target in the max-reward state is just the reward and does not change with time. This example suggests that the max-reward approach is a better choice in environments where the task of the agent is to reach the goal state.

\begin{figure}
    \centering
    \includegraphics[scale=0.45]{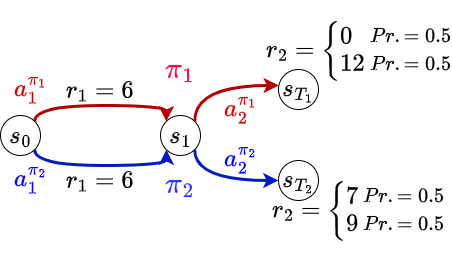}
    \vskip -0.2in
    \caption{A three-state MDP with deterministic transitions and stochastic rewards. Two different policies, $\pi_1$ and $\pi_2$, share the same first action $a_1$, but then have different $a_2$, thereby resulting in different reward distributions.}
    \label{fig:det-vs-stoch}
\end{figure}

\textbf{Issues when stochasticity is present}. Unfortunately, the max-reward approach described above has a serious theoretical drawback. Expanding the definition of $\hat{q}^\ast_{det}$ for more timesteps, we obtain a nested sequence of non-interchangeable $\lor$ and $\mathbb{E}$:
\begin{equation*}
 \hat{q}^\ast_{det}(s,a) = \E_{\pi^\ast}\Big[r_{t+1}\lor\gamma\E_{\pi^\ast}\big[r_{t+2}\lor\ldots\big]\Big|\substack{s_t=s\\a_t=a}\Big]
\end{equation*}
Using Jensen's inequality \cite{jensen1906fonctions}, we conclude the following:
\begin{equation}
    \label{eq:jensen}
    \hat{q}_{det}^\ast(s,a)\leq 
    \E_{\pi^\ast}\big[\hat{G}_t|\substack{s_t=s\\a_t=a}\big]
\end{equation}
When both the policy and the transition model are deterministic, Eq.~\eqref{eq:jensen} becomes an equality. However, if stochasticity is present, the value of $\hat{q}_{det}^\ast(s,a)$ is merely a lower bound of the expected return. Hence, it can induce suboptimal policies.

In Figure \ref{fig:det-vs-stoch}, we show an example where the policy maximizing $\hat{q}^\ast_{det}$ is suboptimal. The figure demonstrates a three-state MDP and two policies, $\pi_1$ (red arrows) and $\pi_2$ (blue arrows). Let $\gamma=1$ for simplicity. For the state $s_0$, the expected max-reward return is higher for the policy $\pi_1:$
\begin{equation*}
    \begin{aligned}
        \E_{\pi_1}[\hat{G}_0] & = \E_{\pi_1}[r_1\lor r_2] = 9 > \\ &  
    \E_{\pi_2}[\hat{G}_0] = \E_{\pi_2}[r_1 \lor r_2] = 8
    \end{aligned}
\end{equation*}
So $\pi_1$ is better in terms of the expected max-reward return, but the value functions have the following values:
\begin{equation*}
\hat{q}_{det}^{\pi_1}(s_0)=\E_{\pi_1}\big[r_1\lor \E_{\pi_1} [r_2]\big] = \E_{\pi_1}[r_1\lor 6] = 6
\end{equation*}
\begin{equation*}
\hat{q}_{det}^{\pi_2}(s_0)=\E_{\pi_2}\big[r_1\lor \E_{\pi_2} [r_2]\big] = \E_{\pi_2}[r_1\lor 8] = 8
\end{equation*}
Based on the values of $\hat{q}_{det}^\pi$, we would conclude that $\pi_2$ is better, which we already showed to be incorrect. This example demonstrates that even in a simple stochastic environment, the operator $\hat{T}_{det}^\ast$ can lead to incorrect policies. Therefore, it is an open question whether there exists a Bellman-like operator that would enable learning max-reward returns in the stochastic setting. 

\section{Max-reward RL}
\label{sec:max-reward RL}
In this section, we introduce a novel approach to max-reward RL that is theoretically sound, works for both stochastic and deterministic cases, and can be combined with state-of-the-art RL algorithms.
First, we expand the definition of the max-reward return given in Eq.~\eqref{eq:G_hat}:
\begin{equation}
\label{eq:bellman_G_max}
\E_{\pi}[\hat{G}_t] = \E_{\pi}\big[r_{t+1} \lor \gamma \hat{G}_{t+1}\big]    
\end{equation}
Since $\E$ and $\lor$ do not commute, it is impossible to extract the term $\E_\pi\big[G_{t+1}\big]$ on the right-hand side of Eq.~\eqref{eq:bellman_G_max}. Because of that, we cannot obtain an equation involving only $\E_{\pi}\big[G_t\big],\;\E_{\pi}\big[G_{t+1}\big]$, and $r_{t+1}.$
Instead, we will utilize an approach from stochastic optimal control theory \cite{Kroner2018-lo} and define the max-reward value function using an auxiliary variable that allows propagating information between timesteps:
\begin{definition} \label{def:value-functions}
Let $y\in\mathbb{R}$ be an auxiliary real variable. The \emph{max-reward value functions} are defined as follows: 
    \begin{equation*} 
    \hat{v}^\pi(s, y) = \E_{\pi}[y\lor\hat{G}_t\big|\substack{s_t=s}]
\end{equation*}
\begin{equation*}
    \hat{q}^\pi(s, a, y) =  \E_{\pi}[y\lor\hat{G}_t \big|\substack{s_t=s\\a_t=a}]
\end{equation*}
\end{definition}
Since reward is lower-bounded, $r_{t+1}\geq 0$, we can always recover the expected max-reward return  $\E_{\pi}\big[\hat{G}_t\big]$ by substituting $y=0$ into the value functions:
\begin{equation}   
    \label{eq:lb}
    \begin{aligned}
    \hat{v}^\pi(s, 0) = 
    \E_{\pi}[\hat{G}_t\big|\substack{s_t=s}] 
    \\
    \hat{q}^\pi(s, a, 0) = \E_{\pi}[\hat{G}_t\big|\substack{s_t=s\\a_t=a}]
    \end{aligned}
\end{equation}
Hence, if we find an efficient method of learning the max-reward value functions, we will be able to optimize $\E_{\pi}[\hat{G}_t]$.

The auxiliary variable $y$ is crucial when dealing with the max-reward returns. When we look at the value of the state $s'$ from the perspective of state $s,$ we must consider the immediate reward $r=r(s, a, s').$ Specifically, we should treat low reward trajectories from $s'$ as if they still yield the reward of $r.$ Expanding upon this observation, we conclude that maximization of the maximum reward requires propagating information about the past rewards. This is achieved via the auxiliary variable $y.$

By combining the definition of the max-reward value functions with Eq.~\eqref{eq:bellman_G_max}, we obtain the following recursive equations:
\begin{lemma} \label{lemma:bellman-eq}
    Let $y\in\R$ and let $y':=\frac{R(s,a,s_{t+1})\lor y}{\gamma}$. Then, the max-reward value functions are subject to the following Bellman-like equations:
    \begin{equation*}
    \hat{v}^\pi(s, y) = \gamma\E_{\substack{a_t\\s_{t+1}}} \big[y'\lor\hat{v}^\pi(s_{t+1}, y')\big|\substack{s_t=s}\big]
    \end{equation*}
    \begin{equation*}
    \hat{q}^\pi(s, a, y) = \gamma\E_{\substack{s_{t+1}\\a_{t+1}}} \big[y'\lor \hat{q}^\pi(s_{t+1}, a_{t+1}, y')\big| \substack{s_t=s\\a_t=a}\big]
    \end{equation*}
\end{lemma}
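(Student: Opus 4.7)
The plan is to unroll the max-reward return one step via Eq.~\eqref{eq:bellman_G_max}, then use positive homogeneity of the binary $\lor$ to absorb the discount factor and consolidate the ``past'' terms into the new threshold $y'$, and finally invoke the Markov property so that the one-step-shifted problem is exactly the value function evaluated at the next state with an updated threshold.

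Concretely, starting from $\hat v^\pi(s,y) = \E_\pi[y \lor \hat G_t \mid s_t = s]$ and expanding $\hat G_t = r_{t+1} \lor \gamma \hat G_{t+1}$, we obtain inside the expectation the quantity $y \lor r_{t+1} \lor \gamma \hat G_{t+1}$. The key algebraic identity is that for any $\gamma > 0$ and any reals $a,b$, $\gamma(a \lor b) = (\gamma a) \lor (\gamma b)$, so $y \lor r_{t+1} \lor \gamma \hat G_{t+1} = \gamma\bigl[(y \lor r_{t+1})/\gamma \lor \hat G_{t+1}\bigr] = \gamma\bigl[y' \lor \hat G_{t+1}\bigr]$, with $y'$ exactly as defined in the statement. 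This is the real engine of the proof and explains why the auxiliary update takes that particular form: it is the unique choice that lets the discount factor come out cleanly while folding the immediate reward into the running threshold.

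It then remains to apply the tower property, conditioning first on $(a_t, s_{t+1})$, which gives $\hat v^\pi(s,y) = \gamma\, \E_{a_t, s_{t+1}}\bigl[\,\E_\pi[y' \lor \hat G_{t+1} \mid s_t=s, a_t, s_{t+1}] \,\big|\, s_t = s\bigr]$. The main (small) obstacle is to justify the inner step. First, $y'$ is a deterministic function of $(s_t, a_t, s_{t+1})$, hence measurable with respect to the conditioning sigma-algebra and can be treated as a constant in the inner expectation. Second, by the Markov property the conditional law of $\hat G_{t+1}$ given the trajectory through $s_{t+1}$ depends only on $s_{t+1}$, and $\hat G_{t+1}$ starting from $s_{t+1}$ is distributed as $\hat G_t$ starting from $s_{t+1}$. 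Therefore the inner expectation equals $\hat v^\pi(s_{t+1}, y')$ by Definition~\ref{def:value-functions}, yielding the claimed equation. The extra $y' \lor$ in front of $\hat v^\pi(s_{t+1}, y')$ on the right-hand side of the lemma is actually redundant, since $\hat v^\pi(\cdot, y') \geq y'$ pointwise by construction, but it is harmless and consistent with the factored form.

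The $\hat q^\pi$ identity is proved by exactly the same argument, with the outer conditioning taken on $(s_t = s, a_t = a)$ and the peeled-off step requiring the draw of both $s_{t+1} \sim P(\cdot \mid s, a)$ and $a_{t+1} \sim \pi(\cdot \mid s_{t+1})$, which produces the $\E_{s_{t+1}, a_{t+1}}$ on the right-hand side. I expect no additional technical difficulty beyond what is needed for the state-value case.
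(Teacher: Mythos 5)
Your proposal is correct and follows essentially the same route as the paper's proof: unroll $\hat{G}_t = r_{t+1}\lor\gamma\hat{G}_{t+1}$, use positive homogeneity of $\lor$ to absorb $y\lor r_{t+1}$ into $\gamma y'$, and apply the tower/Markov property to identify the inner expectation with $\hat{v}^\pi(s_{t+1},y')$. The only cosmetic difference is how the extra $y'\lor$ term is obtained — the paper inserts it via the idempotency $y'=y'\lor y'$ before conditioning, while you justify it afterwards by noting $\hat{v}^\pi(\cdot,y')\geq y'$; both are valid.
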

Proof of this lemma, as well as all other proofs, can be found in Appendix \ref{proofs}. The extra term ${y' \lor}$ might seem redundant, but it is important since it enforces the boundary conditions. Without it, the functions $v\equiv 0$ and $q\equiv 0$ would be solutions to these equations.
Using Lemma \ref{lemma:bellman-eq}, we can define Bellman-like operators for the max-reward value functions: 
\begin{definition}
    Let $v: \mathcal{S}\times \mathbb{R}\to \mathbb{R},\: q:\mathcal{S}\times\mathcal{A}\times \mathbb{R}\to \mathbb{R} $ be real-valued functions and let $y':=\frac{R(s,a,s_{t+1})\lor y}{\gamma}$. Then, the \emph{max-reward Bellman operator} $\hat{\mathcal{T}}^\pi$ is defined as follows:
    \begin{equation*}
    \hat{\mathcal{T}}^\pi v (s, y) := \gamma\E_{\substack{a_t\\s_{t+1}}} \big[y'\lor v(s_{t+1}, y')\big|\substack{s_t=s}\big]
    \end{equation*}
    \begin{equation*}
    \hat{\mathcal{T}}^\pi q(s,a,y) := \gamma\E_{\substack{s_{t+1}\\a_{t+1}}} \big[y'\lor q(s_{t+1}, a_{t+1}, y')\big| \substack{s_t=s\\a_t=a}\big]
    \end{equation*}
\end{definition}
In the following theorem, we prove that this operator is a contraction and that the max-reward state and state-action value functions are its fixed points.
\begin{theorem}
\label{theorem:contraction}$\hat{\mathcal{T}}^\pi$ is a $\gamma-$contraction with respect to the $L_\infty$ norm, and $\hat{v}^\pi$(or $\hat{q}^\pi$) is its fixed point.
\end{theorem}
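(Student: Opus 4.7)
The plan has two parts: establishing that $\hat{v}^\pi, \hat{q}^\pi$ are fixed points, and proving the $\gamma$-contraction property. The fixed-point claim is essentially free: Lemma~\ref{lemma:bellman-eq} states exactly that $\hat{v}^\pi$ and $\hat{q}^\pi$ satisfy the Bellman-like equations, and comparing these equations to the definition of $\hat{\mathcal{T}}^\pi$ shows that $\hat{\mathcal{T}}^\pi \hat{v}^\pi = \hat{v}^\pi$ and $\hat{\mathcal{T}}^\pi \hat{q}^\pi = \hat{q}^\pi$. So the only real work is proving contraction.

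For contraction, I will focus on the state-action version (the state version is analogous). Fix two bounded functions $q_1, q_2 \colon \mathcal{S} \times \mathcal{A} \times \mathbb{R} \to \mathbb{R}$ and an arbitrary triple $(s, a, y)$. The key technical ingredient is the non-expansiveness of the binary $\lor$ operator: for any $\alpha, \beta, c \in \mathbb{R}$,
\begin{equation*}
\bigl| (c \lor \alpha) - (c \lor \beta) \bigr| \leq |\alpha - \beta|.
\end{equation*}
I would state and justify this in one line (it follows by case analysis on the sign of $\alpha - \beta$ and whether $c$ dominates).

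Applying this pointwise with $c = y'$, $\alpha = q_1(s_{t+1}, a_{t+1}, y')$, $\beta = q_2(s_{t+1}, a_{t+1}, y')$, and then using $|\mathbb{E}[X]| \leq \mathbb{E}[|X|]$ (Jensen), I get
\begin{equation*}
\bigl| \hat{\mathcal{T}}^\pi q_1(s, a, y) - \hat{\mathcal{T}}^\pi q_2(s, a, y) \bigr|
\leq \gamma \, \E_{s_{t+1}, a_{t+1}} \bigl[ |q_1 - q_2|(s_{t+1}, a_{t+1}, y') \bigr]
\leq \gamma \, \| q_1 - q_2 \|_\infty,
\end{equation*}
where the last step uses that the integrand is bounded by the supremum norm for any realization of $s_{t+1}, a_{t+1}, y'$. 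Taking the supremum over $(s, a, y)$ on the left yields $\| \hat{\mathcal{T}}^\pi q_1 - \hat{\mathcal{T}}^\pi q_2 \|_\infty \leq \gamma \, \| q_1 - q_2 \|_\infty$, which is the desired $\gamma$-contraction.

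There is no real obstacle here; the argument is structurally the same as the classical proof that the standard Bellman operator is a contraction, with the non-expansiveness of $\lor$ playing the role usually played by the fact that addition of $r_{t+1}$ cancels out in the difference. One subtlety worth mentioning explicitly in the write-up is that the auxiliary value $y' = (R(s,a,s_{t+1}) \lor y)/\gamma$ is the same for both $q_1$ and $q_2$ inside the expectation, so it plays the role of the fixed constant $c$ in the non-expansiveness inequality and does not introduce any additional error term. Uniqueness of the fixed point then follows from Banach's fixed point theorem applied in the Banach space of bounded functions under $\|\cdot\|_\infty$, which also confirms that iterating $\hat{\mathcal{T}}^\pi$ converges to $\hat{v}^\pi$ (resp.\ $\hat{q}^\pi$) from any bounded initialization.
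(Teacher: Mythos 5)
Your proposal is correct and follows essentially the same route as the paper's proof: non-expansiveness of the binary $\lor$ operator with the shared constant $y'$, followed by $|\E[X]|\leq\E[|X|]$ and a supremum bound, then Banach's fixed-point theorem combined with Lemma~\ref{lemma:bellman-eq} to identify the fixed point. The only cosmetic difference is that the paper justifies the non-expansiveness via the identity $x\lor y = 0.5(x+y+|x-y|)$ rather than by case analysis.
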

Theorem \ref{theorem:contraction} implies that the max-reward value functions can be learned in the same way as the standard value functions -- by sampling from the environment and applying Bellman operators.
In the next section, we define the objective function of the max-reward RL problem and discuss how the presence of the auxiliary variable $y$ impacts the notion of optimal policy.

\subsection{Max-reward objective}
\label{sec:max-reward-obj}
Similarly to standard RL, the main objective in the max-reward RL problems is to maximize the expected (max-reward) return from the initial state, defined as follows:
\begin{equation}
        \hat{J}(\pi) = \E_{s_0\sim p_0}\big[\hat{v}^\pi(s_0, 0)\big]
\end{equation}
Then, the optimal policy is naturally defined as :
\begin{equation}
\label{eq:opt-pi}
\pi^\ast = \arg\max_{\pi}\hat{J}(\pi).
\end{equation}
To better understand the properties of the max-reward optimal policy, consider again the MDP in Figure \ref{fig:det-vs-stoch}. Let $\gamma=1.$ Then, the values of the objective function for $\pi_1$ and $\pi_2$ can be computed as follows:
\begin{equation*}
    \hat{J}(\pi_1) = \E_{\pi_1}[6\lor r_2] = 9
\end{equation*}
\begin{equation*}
    \hat{J}(\pi_2) = \E_{\pi_2}[6\lor r_2] = 8
\end{equation*}
Hence, $\pi_1$ is optimal. However, if we consider the max-reward return from $t=1,$ we have 
\begin{equation*}
    \E_{\pi_1}[G_1]=\frac{12+0}{2}=6\quad 
    \E_{\pi_2}[G_1]=\frac{9+7}{2}=8
\end{equation*}
and hence $\pi_2$ obtains higher expected max-reward return starting at $s=s_1.$ Seemingly, there is a contradiction: $\pi_1$ is optimal but $\pi_2$ is better from the state $s_1.$ However, the explanation is simple:
the maximum reward is the highest reward encountered anywhere along the trajectory. An optimal decision thus not only depends on the current state, as with the cumulative reward, but also on the maximum reward that has been acquired thus far.
In the example, if we start from $s_1$, then we haven’t encountered any reward yet. Hence, following $\pi_1$, we will have $r_2=0$ as the maximum reward half of the time. If we start from $s_0$, we receive a reward of $r_1=6$ when going to $s_1$. Then, the maximum reward will not be lower than 6, even if we get $r_2=0$. Thus, we conclude:
\begin{center}
    \emph{In max-reward RL, the optimal policy $\pi^\ast$ maximizing $\hat{J}(\cdot)$ should depend not only on the current state, but also on the rewards obtained so far.} 
\end{center}

To formalize this observation, we introduce additional notation. We define the \textit{extended state space} as $\hat{\mathcal{S}}:=\Set{S}\times\R$ and we denote extended states by ${\hat{s}=(s,y),\; s\in\Set{S},\;y\in\R.}$ Then, for an extended state $(s,y)\in\hat{\Set{S}}$ and for an action $a\in\Set{A},$ the \textit{extended transition model}  $\hat{P}(\cdot, \cdot|s,y,a)$ is a PDF over ${(s', y')\in\hat{\Set{S}}},$ defined as 
\begin{equation*}
    \hat{P}(s', y'|s,y,a)=P(s'|s,a)\,\delta\Big(y'-\frac{R(s,a,s')\lor y}{\gamma}\Big)
\end{equation*}
where $\delta(\cdot)$ is the Dirac delta function. The initial distribution of $(s_0,y_0)$ is given by $\hat{p}_0(s_0,y_0)=p(s_0)\delta(y_0)$ thereby ensuring $y_0\equiv 0$. Combining everything, we introduce the following definition:
\begin{definition}
    \label{def:ext-mdp}
    Let $M=(\mathcal{S}, \mathcal{A}, R, P, p_0, \gamma)$ be an MDP. Then, the \textit{extended max-reward MDP} is an MDP $\hat{M}$ given by the tuple $(\hat{\mathcal{S}}, \mathcal{A}, R, \hat{P}, \hat{p_0}, \gamma).$
\end{definition}
Essentially, the extended MDP defined above tracks the (inversely) discounted maximum reward obtained so far. For example, if the maximum reward so far is $r_1$, then the extended state at timestep $t$ is $(s_t, \frac{r_1}{\gamma^{t}})$. Hence, to improve $\hat{J}$, we need $r_{t+1} > \frac{r_1}{\gamma^{t}}$.

Using the notion of extended MDP, we can redefine policy for the max-reward RL:
\begin{definition}
    Let $M$ be an MDP and let $\hat{M}$ be its induced extended max-reward MDP. Then, any policy $\hat{\pi}$ in $\hat{M}$ is an \textit{extended max-reward policy.} 
\end{definition}
After we have defined optimality in the max-reward sense, we can introduce the max-reward Bellman optimality operator:
\begin{definition}
\label{def:opt-maxr-bellman}
    Let $q:\mathcal{S}\times\mathcal{A}\times \mathbb{R}\to \mathbb{R} $ be a real-valued function and let $y':=\frac{R(s,a,s_{t+1})\lor y}{\gamma}$. Then, the \emph{max-reward Bellman optimality operator} $\hat{\mathcal{T}}^\ast$ is defined as follows:
    \begin{equation*}
    \hat{\mathcal{T}}^\ast q(s,a,y) := \gamma\E_{{s_{t+1}}} \big[y'\lor \max_{a'}q(s_{t+1}, a', y')\big| \substack{s_t=s\\a_t=a}\big]
    \end{equation*}
\end{definition}
Similarly to $\hat{T}^\pi,$ this operator is also a contraction:
\begin{theorem}
\label{theorem:opt-contraction}$\hat{\mathcal{T}}^\ast$ is a $\gamma-$contraction with respect to the $L_\infty$ norm, and $\hat{q}^\ast$ is its fixed point.
\end{theorem}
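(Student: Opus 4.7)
The plan is to mirror the pattern used in standard MDP theory: first prove that $\hat{\mathcal{T}}^\ast$ is a $\gamma$-contraction on $(B(\hat{\mathcal{S}}\times\mathcal{A}),\|\cdot\|_\infty)$, which by Banach's fixed-point theorem furnishes a \emph{unique} fixed point $q^\star$, and then identify $q^\star$ with $\hat{q}^\ast$ via a two-sided policy-comparison argument. The contraction step is essentially a chain of three elementary $1$-Lipschitz inequalities, while the fixed-point identification is the step that actually uses the extended-MDP framework from Section~3.

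For the contraction, I would fix two candidate functions $q_1, q_2$ and a point $(s,a,y)$. Since $y'=(R(s,a,s_{t+1})\lor y)/\gamma$ depends only on $s_{t+1}$ (not on $q$), I can subtract inside the expectation and apply, in order: (i) $|\mathbb{E}[X]-\mathbb{E}[Y]|\leq \mathbb{E}|X-Y|$, (ii) the $1$-Lipschitzness of $a\mapsto a\lor c$, i.e.\ $|c\lor \alpha - c\lor\beta|\leq|\alpha-\beta|$, applied with $c=y'$, (iii) the standard bound $|\max_{a'}f(a')-\max_{a'}g(a')|\leq \max_{a'}|f(a')-g(a')|$. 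This gives pointwise
\begin{equation*}
\bigl|\hat{\mathcal{T}}^\ast q_1(s,a,y)-\hat{\mathcal{T}}^\ast q_2(s,a,y)\bigr|\;\leq\;\gamma\,\mathbb{E}_{s_{t+1}}\!\max_{a'}\bigl|q_1(s_{t+1},a',y')-q_2(s_{t+1},a',y')\bigr|\;\leq\;\gamma\|q_1-q_2\|_\infty,
\end{equation*}
and taking the supremum over $(s,a,y)$ yields the $\gamma$-contraction. Banach then supplies a unique fixed point $q^\star$.

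To identify $q^\star=\hat{q}^\ast$, I would sandwich the two functions. For the upper bound, pick any extended policy $\hat\pi$; by Lemma~\ref{lemma:bellman-eq}, $\hat{q}^{\hat\pi}$ is the fixed point of $\hat{\mathcal{T}}^{\hat\pi}$, and since a maximum dominates an expectation in $a_{t+1}$ and $v\mapsto y'\lor v$ is monotone, $\hat{\mathcal{T}}^\ast \hat{q}^{\hat\pi}\geq \hat{\mathcal{T}}^{\hat\pi}\hat{q}^{\hat\pi}=\hat{q}^{\hat\pi}$. Because $\hat{\mathcal{T}}^\ast$ is also monotone, iterating and invoking the contraction give $\hat{q}^{\hat\pi}\leq q^\star$; taking the supremum over $\hat\pi$ yields $\hat{q}^\ast\leq q^\star$. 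For the reverse direction, I would let $\hat\mu(s,y)\in\arg\max_{a}q^\star(s,a,y)$ be the greedy extended policy derived from $q^\star$. Then $\hat{\mathcal{T}}^{\hat\mu}q^\star=\hat{\mathcal{T}}^\ast q^\star=q^\star$, so by the uniqueness part of Theorem~\ref{theorem:contraction} applied to $\hat{\mathcal{T}}^{\hat\mu}$ one concludes $\hat{q}^{\hat\mu}=q^\star$, whence $\hat{q}^\ast\geq \hat{q}^{\hat\mu}=q^\star$.

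The main obstacle is the greedy-policy step of the lower bound: for continuous $\mathcal{A}$ the $\arg\max$ over $a$ need not exist, and even when it does, the map $(s,y)\mapsto \hat\mu(s,y)$ must be measurable for $\hat{q}^{\hat\mu}$ to be well defined. I would address this either by assuming sufficient regularity (e.g.\ a compact action set and continuity of $q^\star$, invoking Berge's theorem and a measurable selection result such as Kuratowski--Ryll-Nardzewski), or by replacing the exact greedy policy with an $\varepsilon$-greedy one, obtaining $\hat{q}^\ast\geq q^\star-\varepsilon\,\frac{1}{1-\gamma}$ and sending $\varepsilon\to 0$. Everything else—monotonicity of $\hat{\mathcal{T}}^\ast$ and the algebraic manipulations with $\lor$—is routine once the contraction and the extended-MDP viewpoint are in hand.
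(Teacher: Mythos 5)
Your contraction step is essentially identical to the paper's: the same chain of (i) Jensen-type bound on the expectation, (ii) the $1$-Lipschitz property $|c\lor\alpha-c\lor\beta|\leq|\alpha-\beta|$, and (iii) $|\max_{a'}f-\max_{a'}g|\leq\max_{a'}|f-g|$, applied in a slightly different order but yielding the same bound. Where you genuinely diverge is the second half of the claim. The paper's proof of Theorem~\ref{theorem:opt-contraction} consists only of the contraction inequalities and never verifies that $\hat{q}^\ast$ is the fixed point; this is not a harmless omission, because unlike the on-policy case—where Lemma~\ref{lemma:bellman-eq} explicitly shows $\hat{q}^{\hat\pi}$ satisfies the Bellman equation and hence is the fixed point of $\hat{\mathcal{T}}^{\hat\pi}$—there is no stated analogue of the optimality equation for $\hat{q}^\ast=\sup_{\hat\pi}\hat{q}^{\hat\pi}$. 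Your two-sided sandwich argument (any policy gives $\hat{q}^{\hat\pi}\leq q^\star$ via monotonicity and $\hat{\mathcal{T}}^\ast\geq\hat{\mathcal{T}}^{\hat\pi}$; the greedy or $\varepsilon$-greedy policy gives $\hat{q}^\ast\geq q^\star$ by reducing to Theorem~\ref{theorem:contraction}) supplies exactly the missing identification, and your flagging of the measurable-selection issue for continuous $\mathcal{A}$, together with the $\varepsilon$-greedy workaround, is the correct way to handle the one genuine technical obstacle. In short: same proof as the paper for what the paper proves, plus a correct completion of the part the paper leaves unproved.
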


We have most of the pieces of the max-reward RL framework. We established that it operates on the extended max-reward MDP $\hat{M}$, where the extended states preserve information about the past rewards. Then, both the max-reward optimal and on-policy value functions can be learned by sampling transitions from $\hat{M}$. Therefore, all DQN-based methods \cite{mnih2013playing} can be used under the max-reward RL paradigm directly. However, most state-of-the-art RL algorithms utilize policies parametrized by neural networks. This is possible due to the policy gradient theorems \cite{sutton1999policy, silver2014deterministic}, as they allow estimating the objective function gradient with respect to the policy parameters via sampling. In the next section, we formulate and prove max-reward policy gradient theorems for both deterministic and stochastic extended max-reward policies. 

\subsection{Policy gradient theorems}
First, we define $\hat{p}^{\hat{\pi}}_t(s_0,y_0, s, y)$ -- the probability measure of arriving in the extended state $(s,y)$ after $t$ timesteps, starting from $(s_0, y_0)$ and executing the extended policy $\hat \pi$. Let 
$$\hat{P}^{\hat{\pi}}(s',y'|s,y)=\int_{a}\hat{\pi}(a|s,y)\hat{P}(s',y'|s,y,a)da$$ be the ``on-policy'' transition model. Then, $\hat{p}^{\hat{\pi}}_t(s_0,y_0, s, y)$ is defined as follows:
\begin{equation*}
\hat{p}_0(s_0,y_0,s,y)=\delta(s-s_0)\delta(y-y_0)
\end{equation*}
\begin{equation*}
\begin{aligned}
\hat{p}^{\hat{\pi}}_t(s_0,y_0,s,y)= \mathop{\int}_{\Tilde{s}, \Tilde{y}}&\hat{p}^{\hat{\pi}}_{t-1}(s_0,y_0,\Tilde{s},\Tilde{y})\,\hat{P}^{\hat{\pi}}(s,y|\Tilde{s},\Tilde{y})\,d\Tilde{s}\,d\Tilde{y}
\end{aligned}
\end{equation*}
The discounted stationary state distribution of an extended max-reward MDP is then given by
\begin{equation*}
    \hat{d}^{\hat{\pi}}(s, y) =\!\!\! \mathop{\int}_{s_0,y_0}\!\!\!\hat{p}_0(s_0,y_0)\sum_{t=0}^{\infty}\gamma^t\,\hat{p}^{\hat{\pi}}_t(s_0,y_0, s, y)\,ds_0\,dy_0.
\end{equation*}
As such, $\hat{d}^{\hat{\pi}}$ is not a distribution. However, it can be normalized into one by dividing it by $C=\int_{s,y}\hat{d}^{\hat{\pi}}(s, y)\,ds\,dy$.

Finally, we can formulate and prove the max-reward policy gradient theorems. Consider a neural network with weights $\theta$ that represents a stochastic policy. Then, we have the following result:

\begin{theorem}
    \label{th:stochastic-pg}
    Let $\hat{\pi}_\theta:\mathcal{S}\times\R\to\mathcal{P}(\mathcal{A})$ be a stochastic extended max-reward policy parameterized with $\theta$. Then, the following holds for $\nabla_\theta \hat{J}(\theta)$:
    \begin{equation*}
        \nabla_\theta\; \hat{J}(\theta) \propto \E_{\substack{(s,y)\sim\hat{d}^{\hat{\pi}}
        \\a\sim\hat{\pi}_\theta}}
\big[\hat{q}^{\hat{\pi}_\theta}(s,a,y)\nabla_\theta \ln\hat{\pi}_\theta(a|s,y)\big]
    \end{equation*}
\end{theorem}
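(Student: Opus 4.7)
The plan is to follow the Sutton--Barto policy gradient derivation, but carried out on the extended MDP $\hat{M}$. Starting from $\hat{J}(\theta)=\int p_0(s_0)\,\hat{v}^{\hat{\pi}_\theta}(s_0,0)\,ds_0$, I differentiate under the integral and expand $\hat{v}^{\hat{\pi}_\theta}(s,y)=\int_a \hat{\pi}_\theta(a|s,y)\,\hat{q}^{\hat{\pi}_\theta}(s,a,y)\,da$ via the product rule, splitting $\nabla_\theta \hat{v}^{\hat{\pi}_\theta}(s,y)$ into a ``policy'' term $\phi(s,y):=\int_a [\nabla_\theta \hat{\pi}_\theta(a|s,y)]\,\hat{q}^{\hat{\pi}_\theta}(s,a,y)\,da$ and a ``dynamics'' term containing $\nabla_\theta \hat{q}^{\hat{\pi}_\theta}$.

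To unfold the dynamics term I use Lemma~\ref{lemma:bellman-eq}. The key observation that tames the $\lor$ is that, directly from Definition~\ref{def:value-functions}, every policy $\pi$ satisfies $\hat{v}^\pi(s',y')\ge y'$ and $\hat{q}^\pi(s',a',y')\ge y'$ (since $y'\lor\hat{G}_{t+1}\ge y'$ pointwise inside the expectation). So at the true value function the outer $y'\lor$ in the Bellman equation is always inactive, and the recursion reduces to $\hat{q}^{\hat{\pi}_\theta}(s,a,y)=\gamma\,\E_{s_{t+1}}[\hat{v}^{\hat{\pi}_\theta}(s_{t+1},y')]$, where $y'=(R(s,a,s_{t+1})\lor y)/\gamma$ is a deterministic function of $(s,a,s_{t+1},y)$ and hence independent of $\theta$. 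Differentiating yields $\nabla_\theta \hat{q}^{\hat{\pi}_\theta}(s,a,y)=\gamma\int P(s'|s,a)\,\nabla_\theta\hat{v}^{\hat{\pi}_\theta}(s',y')\,ds'$, and averaging over $a\sim\hat{\pi}_\theta(\cdot|s,y)$ produces the clean recursion
\[
\nabla_\theta \hat{v}^{\hat{\pi}_\theta}(s,y)=\phi(s,y)+\gamma\!\int\!\hat{P}^{\hat{\pi}_\theta}(s',y'|s,y)\,\nabla_\theta \hat{v}^{\hat{\pi}_\theta}(s',y')\,ds'\,dy'.
\]

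Iterating this recursion generates the $t$-step extended-state kernel $\hat{p}^{\hat{\pi}_\theta}_t$ with weight $\gamma^t$; substituting into $\nabla_\theta\hat{J}(\theta)=\int p_0(s_0)\nabla_\theta \hat{v}^{\hat{\pi}_\theta}(s_0,0)\,ds_0$ and using $\hat{p}_0(s_0,y_0)=p_0(s_0)\delta(y_0)$, the sum $\sum_{t\ge 0}\gamma^t \hat{p}^{\hat{\pi}_\theta}_t$ collapses into exactly the (unnormalized) discounted occupancy $\hat{d}^{\hat{\pi}_\theta}$ of Section~\ref{sec:max-reward-obj}, giving $\nabla_\theta \hat{J}(\theta)=\int \hat{d}^{\hat{\pi}_\theta}(s,y)\,\phi(s,y)\,ds\,dy$. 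A final application of the log-derivative identity $\nabla_\theta \hat{\pi}_\theta=\hat{\pi}_\theta\,\nabla_\theta \ln \hat{\pi}_\theta$ inside $\phi$, with the $\theta$-dependent normalizer of $\hat{d}^{\hat{\pi}_\theta}$ absorbed into the proportionality constant, delivers the claimed expression.

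The main obstacle is the non-smoothness of $\lor$ inside the Bellman equation: \emph{a priori} one might worry about subgradient contributions along the ridge $\hat{v}^{\hat{\pi}_\theta}(s',y')=y'$. The inequality $\hat{v}^\pi\ge y$ (and its $\hat{q}$ analog) shows that this ridge is never crossed for any perturbation of $\theta$, so the $\lor$ is always inactive at the true value function and the gradient passes through as if the Bellman operator were linear in $\hat{v}^{\hat{\pi}_\theta}$; the remainder is a direct transcription of the standard proof to $\hat{M}$. Standard dominated-convergence / smoothness assumptions on $\hat{\pi}_\theta$, together with the $\gamma$-contraction of Theorem~\ref{theorem:contraction} (which keeps the infinite sum summable), justify interchanging $\nabla_\theta$ with the integrals and the sum.
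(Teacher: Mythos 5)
Your proposal is correct and follows essentially the same route as the paper's proof: both reduce the max-reward Bellman equation to the ``linear'' recursion $\hat{q}^{\hat{\pi}_\theta}(s,a,y)=\gamma\,\E_{s'}\bigl[\hat{v}^{\hat{\pi}_\theta}(s',y')\bigr]$ (the paper invokes this as Eq.~\eqref{eq:simplified-bellman}, justified by exactly your observation that $\hat{v}^{\pi}(s,y)\ge y$ makes the outer $y'\lor$ inactive), and then transcribe the standard Sutton--Barto unrolling on the extended MDP to obtain $\hat{d}^{\hat{\pi}_\theta}$ and the log-derivative form. Your explicit treatment of why the $\lor$ does not contribute subgradient terms is a slightly more careful rendering of a step the paper states without elaboration.
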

The deterministic max-reward policy gradient follows from the stochastic version:
\begin{corollary}
    \label{th:deterministic-pg}
    Let $\hat{\mu}_\theta:\mathcal{S}\times\R\to\mathcal{A}$ be a deterministic extended max-reward policy parameterized with $\theta$. Then $\nabla_\theta \hat{J}(\theta)$ can be computed as follows:
    \begin{equation*}
        \nabla_\theta J(\theta) \propto
    \E_{\hat{d}^{\hat{\mu}}}
\big[\nabla_\theta \hat{\mu}_\theta(s,y)\nabla_a \hat{q}^{\hat{\mu}_\theta}(s,a,y)|_{a=\mu_\theta(s,y)}\big]
    \end{equation*}
\end{corollary}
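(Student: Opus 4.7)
The plan is to deduce the deterministic result as a limit of the stochastic policy gradient already established in Theorem~\ref{th:stochastic-pg}, following the strategy of \citet{silver2014deterministic} adapted to the extended MDP $\hat{M}$.

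First, I will introduce a family of smoothed stochastic policies $\hat{\pi}_{\theta,\sigma}(\cdot|s,y)$, for concreteness isotropic Gaussians with mean $\hat{\mu}_\theta(s,y)$ and variance $\sigma^2 I$, so that $\hat{\pi}_{\theta,\sigma}(\cdot|s,y)$ converges weakly to the Dirac mass at $\hat{\mu}_\theta(s,y)$ as $\sigma\to 0$. Applying Theorem~\ref{th:stochastic-pg} to each member of this family yields
\begin{equation*}
\nabla_\theta \hat{J}(\hat{\pi}_{\theta,\sigma}) \propto \E_{\substack{(s,y)\sim\hat{d}^{\hat{\pi}_{\theta,\sigma}}\\a\sim\hat{\pi}_{\theta,\sigma}}}\!\big[\hat{q}^{\hat{\pi}_{\theta,\sigma}}(s,a,y)\,\nabla_\theta\ln\hat{\pi}_{\theta,\sigma}(a|s,y)\big].
\end{equation*}

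Second, I will convert the likelihood-ratio estimator into a reparameterized form. Writing $a=\hat{\mu}_\theta(s,y)+\sigma\varepsilon$ with $\varepsilon\sim\mathcal{N}(0,I)$ and applying Stein's identity (integration by parts in $a$), the inner conditional expectation over $a$ becomes
\begin{equation*}
\nabla_\theta\hat{\mu}_\theta(s,y)\,\E_{a\sim\hat{\pi}_{\theta,\sigma}}\!\big[\nabla_a\hat{q}^{\hat{\pi}_{\theta,\sigma}}(s,a,y)\big].
\end{equation*}
Third, I will pass to the limit $\sigma\to 0$. Using the contraction in Theorem~\ref{theorem:contraction} together with continuity of the operator in $\sigma$, we get $\hat{q}^{\hat{\pi}_{\theta,\sigma}}\to\hat{q}^{\hat{\mu}_\theta}$ pointwise and $\hat{d}^{\hat{\pi}_{\theta,\sigma}}\to\hat{d}^{\hat{\mu}_\theta}$ as measures. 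The inner gradient expectation collapses to $\nabla_a\hat{q}^{\hat{\mu}_\theta}(s,a,y)|_{a=\hat{\mu}_\theta(s,y)}$, and dominated convergence applied to the outer expectation yields the claimed identity.

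The hard part will be justifying this limit. The max operator in Lemma~\ref{lemma:bellman-eq} makes $\hat{q}^{\hat{\mu}_\theta}(s,a,y)$ only piecewise differentiable in $(a,y)$: on the ``switching'' set where the argument attaining the $\lor$ flips, the function has a kink. I will therefore need to invoke standard regularity in the spirit of \citet{silver2014deterministic}---smoothness of $R$, $P$, and $\hat{\mu}_\theta$, plus boundedness of $\nabla_a\hat{q}^{\hat{\mu}_\theta}$---and additionally that the switching set has $\hat{d}^{\hat{\mu}_\theta}$-measure zero, so that the reparameterized gradient is well-defined almost everywhere. Under these hypotheses the dominated convergence argument closes and Corollary~\ref{th:deterministic-pg} follows.
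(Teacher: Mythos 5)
The paper itself contains no written proof of this corollary: the appendix stops after the proof of Theorem~\ref{th:stochastic-pg}, and the main text only remarks that the deterministic case ``follows from the stochastic version.'' Your limiting argument in the style of Theorem~2 of \citet{silver2014deterministic} is therefore the intended route, and your first two steps are fine --- Gaussian smoothing, applying Theorem~\ref{th:stochastic-pg} to $\hat{\pi}_{\theta,\sigma}$, and Stein's identity to pass from the likelihood-ratio form to $\nabla_\theta\hat{\mu}_\theta(s,y)\,\E_{a\sim\hat{\pi}_{\theta,\sigma}}[\nabla_a\hat{q}^{\hat{\pi}_{\theta,\sigma}}(s,a,y)]$.

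The gap is in the limit $\sigma\to 0$. Theorem~\ref{theorem:contraction} gives you at best $\|\hat{q}^{\hat{\pi}_{\theta,\sigma}}-\hat{q}^{\hat{\mu}_\theta}\|_\infty\to 0$, but sup-norm convergence of functions does not imply convergence of their derivatives, and what your argument actually needs is $\nabla_a\hat{q}^{\hat{\pi}_{\theta,\sigma}}\to\nabla_a\hat{q}^{\hat{\mu}_\theta}$; this is precisely the step on which \citet{silver2014deterministic} spend their regularity conditions, and it cannot be obtained ``from the contraction together with continuity of the operator in $\sigma$.'' Your measure-zero assumption on the switching set is also more load-bearing than you present it: in the goal-reaching problems this paper targets, a converged policy deliberately drives $r_{t+1}$ toward $\gamma y$, so $\hat{d}^{\hat{\mu}_\theta}$ tends to concentrate mass near the kink of the $\lor$ operator rather than away from it. A cleaner and more self-contained route is to prove the statement directly, mirroring Theorem~1 of \citet{silver2014deterministic} and the paper's own proof of Theorem~\ref{th:stochastic-pg}: differentiate $\hat{v}^{\hat{\mu}_\theta}(s,y)=\hat{q}^{\hat{\mu}_\theta}(s,\hat{\mu}_\theta(s,y),y)$ by the chain rule, use Eq.~\eqref{eq:simplified-bellman} to push the remaining $\theta$-dependence of $\hat{q}^{\hat{\mu}_\theta}$ one step forward through $\hat{P}$, and unroll the recursion to obtain the $\hat{d}^{\hat{\mu}}$-weighted expression. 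That argument needs only almost-everywhere differentiability of $\hat{q}^{\hat{\mu}_\theta}$ in $a$ and avoids any interchange of limits altogether.
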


\begin{figure}[b!]
    \centering
    \includegraphics[scale=0.65]{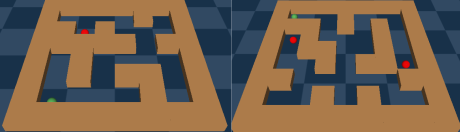}
    \caption{\textit{Left:} Single-goal maze, where the goal (red ball) is always in the same location.  \textit{Right:} Two-goals maze with two spawn locations of the goal (red balls).}
    \label{fig:mazes}
\end{figure}
\begin{figure*}[t!]
\begin{center}
\includegraphics[width=1.9\columnwidth]{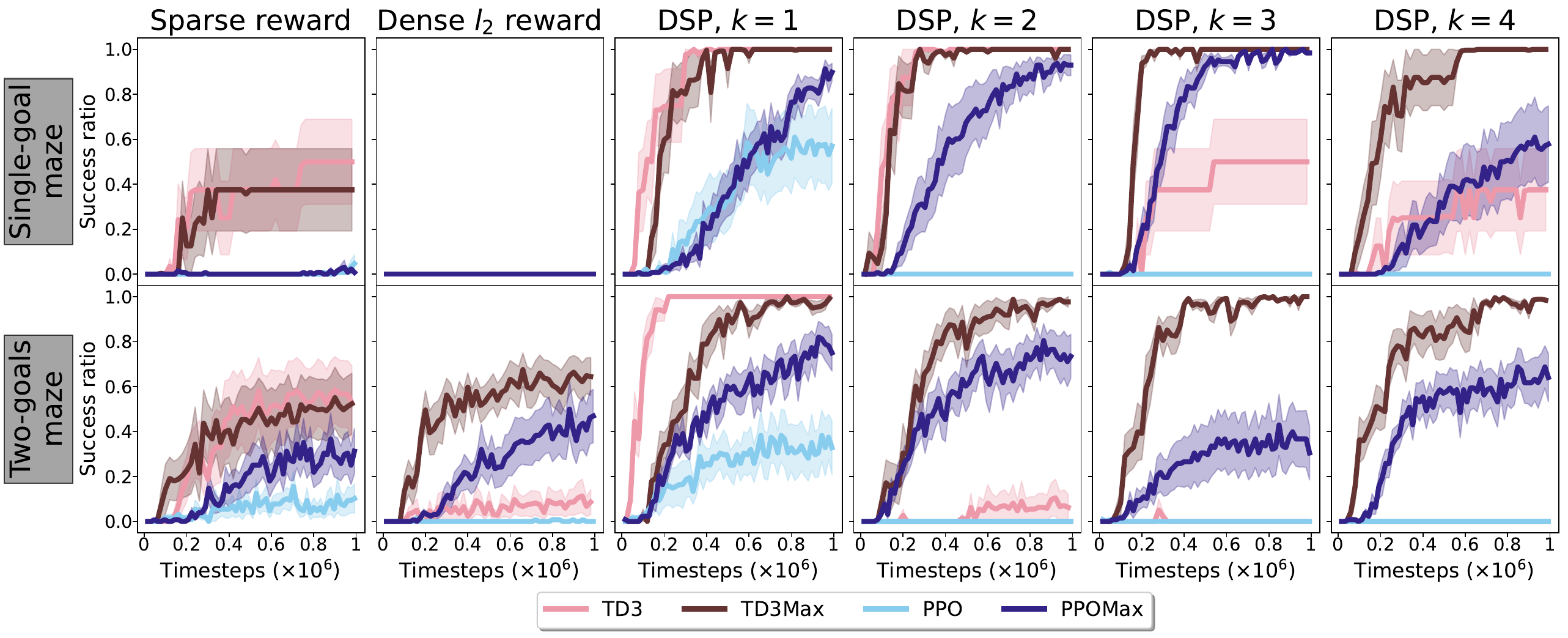}
\caption{Learning curves of TD3, max-reward TD3, PPO, and max-reward PPO on two different mazes. The vertical axis is the success ratio, i.e., whether the goal was reached during the episode. The shaded area is the standard error of the mean. The horizontal axis is the total environmental timesteps in millions. For each maze, we present results for six different reward functions (columns).}
\label{fig:sp-experiment}
\end{center}
\end{figure*}
The policy gradient theorems allow us to use various algorithms from standard RL, {such as REINFORCE} ~\cite{williams1992}, {A2C}~\cite{sutton1999policy}, {A3C}~\cite{mnih2016}, {TRPO}~\cite{schulman2015}, {PPO}~\cite{schulman2017}, {DDPG}~\cite{lillicrap2016}, and {TD3}~\cite{fujimoto2018}, to optimize maximum rewards.
In this work, we focus on PPO and TD3, as they are considered to be the best-performing algorithms within their corresponding families. For max-reward PPO, the only difference compared to the standard version is that the advantage estimation uses max-reward returns. For max-reward TD3, the target value for the $Q$ functions is computed using the max-reward Bellman optimality operator \eqref{def:opt-maxr-bellman}. In Appendix, we provide descriptions of max-reward TD3 and PPO in pseudocode.

\section{Experiments}
To empirically evaluate the benefits of using maximum instead of cumulative reward, we compare the max-reward TD3 and PPO with their cumulative counterparts using two goal-reaching environments from Gymnasium-Robotics \cite{gymnasium_robotics2023github} under different dense reward functions.

\subsection{Maze with shortest path rewards}
First, we consider the Maze environment from Gym Robotics \cite{gymnasium_robotics2023github} illustrated in Figure \ref{fig:mazes}, where the agent controls a ball by applying acceleration in two dimensions. The objective is to reach the goal position in the maze. Episodes last 1000 timesteps and there are no terminal states.
We use two mazes: \textit{single-goal} maze, where the goal is always in the same location, and the \textit{two-goals} maze where at each episode the goal location is chosen randomly from the two possible options.
The main metric in this environment is \textit{success ratio} -- a binary value indicating whether the goal was reached during the episode.

We consider several reward functions that induce the same optimal policy of reaching the goal state:
\begin{itemize}
    \item[1.] \textit{Sparse reward} -- only reaching the goal is rewarded with $r=1.$
    \item[2.] \textit{Dense $l_2$ reward} -- default dense reward, defined as the exponent of the negative of the $l_2$-distance to the goal. Reaching the goal is rewarded with $r=1.$
    \item[3.]\textit{Discrete shortest path (DSP)} -- our custom reward that represents the true, topology aware distance to the goal. To compute it, the maze is split into $n\times m$ cells. Then, the distance matrix $D\in\R^{n\times m }$ is computed such that for each cell $(i,j)$, $D[i, j]$ is the number of cells between $(i,j)$ and the goal-containing cell. The DSP reward with parameter $k\in\mathbb{N}$ is then defined as 
    $$r^k_{dsp}(i,j)=\begin{cases}
        \beta^{D[i,j] + 1},\;\text{ if } D[i,j]=0 \mod k\\
        0,\quad\quad\quad\;\,\text{ otherwise }
    \end{cases}$$
    where $\beta\in (0, 1)$ is a hyperparameter. The value of $k$ controls the sparsity of the reward, i.e., for larger $k$ fewer cells have a non-zero reward.  Reaching the goal is rewarded with $r=1.$
\end{itemize}

\begin{figure*}[t!]
    \begin{center}
    \includegraphics[scale=0.4]{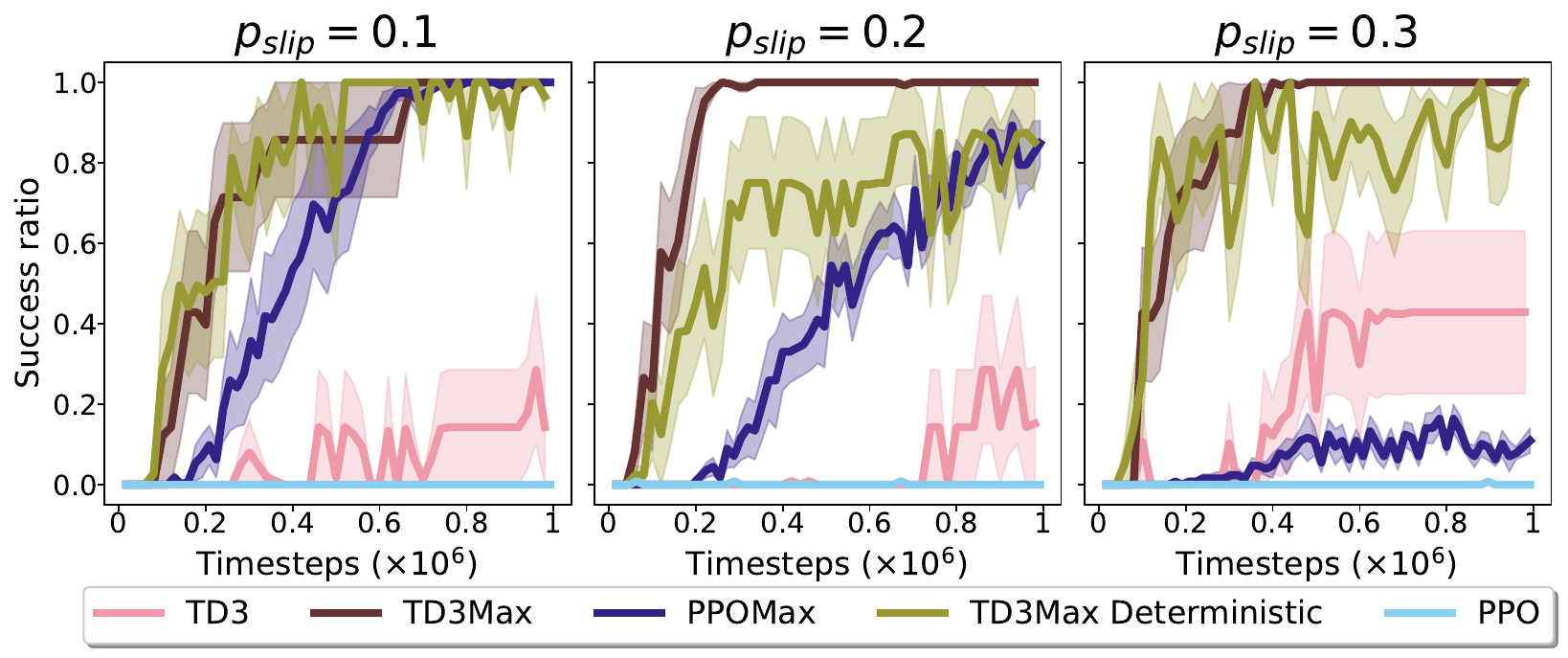}
    \caption{Learning curves of TD3, max-reward TD3, deterministic max-reward TD3, PPO, and max-reward PPO on a stochastic version of the single-goal maze with DSP reward, $k=3$. The vertical axis is the success ratio, the shaded area is the standard error of the mean. The horizontal axis is the total environmental timesteps. The results confirm that our max-reward methods work in stochastic environments.}
    \label{fig:stochastic-maze}
    \end{center}
\end{figure*}
For the DSP reward, we first tune the value of $\beta$ by running standard TD3 and PPO on the single-goal maze. We set $k=1$ and run 10 random seeds for each algorithm for $\beta\in \{0.65, 0.7, 0.75, 0.8, 0.85, 0.9, 0.95\}$. Additionally, we test the negative version of the DSP reward, $r^k_{dsp}(i,j) - 1$, which, in theory, should cause better exploration.
For TD3, the best-performing reward was the negative DSP with $\beta_{TD3}=0.9$, and for PPO -- negative DSP with $\beta_{PPO}=0.95.$  In all other runs involving DSP reward we use these values of $\beta.$ 

Finally, we compare TD3, PPO, max-reward TD3, and max-reward PPO on the single-goal and two-goals mazes using sparse, dense $l_2$, and DSP reward for $k=1,2,3,4$ (for cumulative methods, negative DSP reward is used). Figure \ref{fig:sp-experiment} demonstrates the learning curves. The sparse reward performs inconsistently due to insufficient exploration. Dense $l_2$ reward has local maximums (especially in the single-goal maze) and its performance greatly depends on the maze topology. The DSP reward, which represents the true distance to the goal, overall performs better.

Importantly, we see that the max-reward approaches work for all values of $k,$ while the standard RL methods do not. For larger $k$, the reward becomes sparser, and cumulative approaches tend to converge to suboptimal policies. We believe that the nature of this phenomenon is the same as in the chain environment example discussed in Section~\ref{sec:det-max-r}. Specifically, the Maze environment can be seen as a larger chain with multiple intermediate rewards. During training, all methods quickly learn to stay in one of the cells with non-zero reward. Then, to update the policy, samples of transitions to a better state are needed. For larger $k,$ these transitions become less frequent, as the cells with non-zero reward become further from one another. In line with the chain environment results, max-reward methods require fewer such transitions and therefore perform more efficiently. 

Another potential reason for the superiority of max-reward methods lies in the way how they handle local optima. Since max-reward policy is conditioned on the discounted max-reward so far, $y$, it has no incentive to stay in the local optima. As $y$ ``remembers'' the reward at a local optimum, any trajectory leaving this optimum is at least as good as staying in the optimum. Combined with exploration techniques, e.g., entropy regularization in PPO, this causes the agent to leave local optima after visiting them.

\paragraph{Stochastic Maze.} One of the strengths of our RL formulation is that it works with stochastic environments and/or policies. To experimentally verify that, we conduct an additional experiment using a stochastic variation of the single-goal maze. Specifically, we introduce a parameter $p_{slip}$ which regulates the level of stochasticity. Whenever the agent makes an action, it is replaced with a random action with probability $p_{slip}.$ We compare max-reward and standard versions of TD3 and PPO on this environment. Additionally, we implement and test \textit{deterministic} max-reward TD3 \cite{gottipati2020maximum}. 
In this experiment, we use the DSP reward with $k=3$, as it is a case where the max-reward paradigm demonstrates improvement over standard RL in a deterministic Maze. The results presented in Figure \ref{fig:stochastic-maze} confirm the theory: our max-reward TD3 solves this stochastic environment while the deterministic max-reward TD3 is highly inconsistent. Therefore, we conclude that our method indeed can be used for stochastic environments.

\subsection{Fetch environment}
\begin{figure*}[t!]
    \centering
    \includegraphics[scale=0.45]{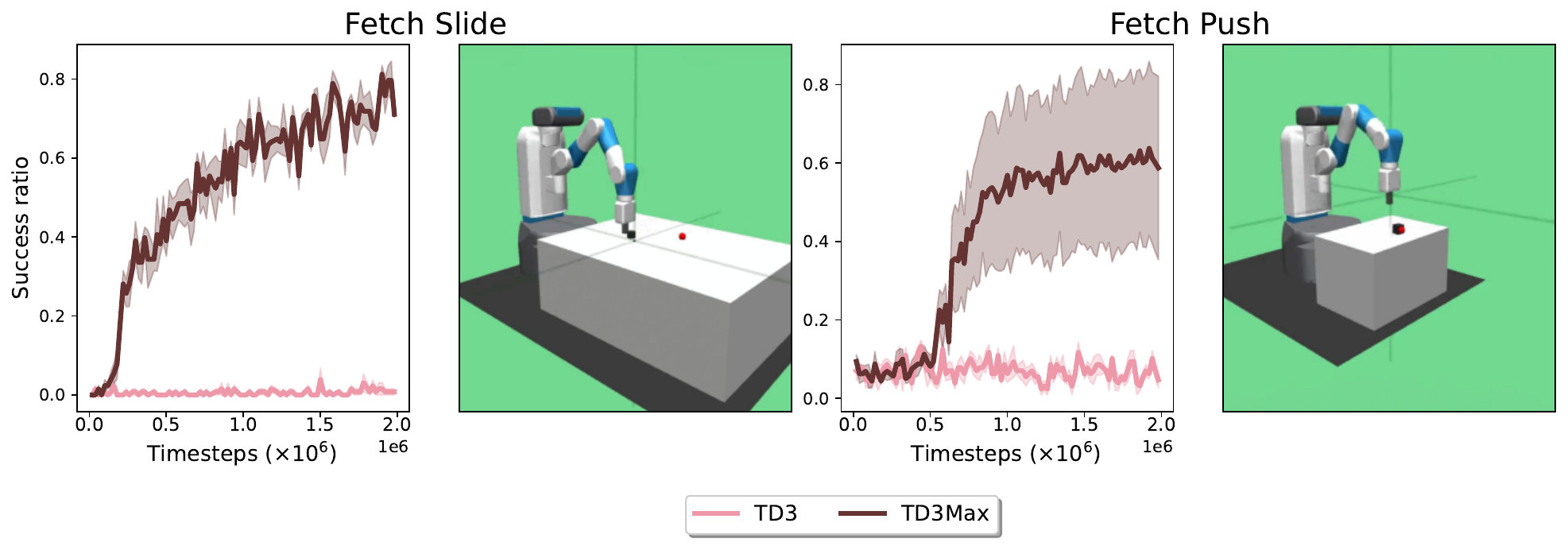}
    \caption{Success ratio for standard (light red) and max-reward (dark-red) TD3 in \textit{Fetch Slide} (left) and \textit{Fetch Push} (right) environments.}
    \label{fig:fetch}
\end{figure*}
In the second experiment, we consider more challenging robotics problems. Specifically, we study the \textit{Fetch-Slide} and \textit{Fetch-Push} environments depicted in Figure \ref{fig:fetch}. The agent controls a 7-DoF manipulator and its goal is to move the puck into the target location. In \textit{Fetch-Slide}, the goal is located beyond agent's reach and hence it needs to slide the puck into the goal. In \textit{Fetch-Push}, the agent needs to push the puck into the goal which can be anywhere on the platform. Each episode is truncated after 100 timesteps and there are no terminal states. We use the standard dense reward for this problem defined as negative of the $l_2-$distance between the puck and the goal.
The performance metric for this environment is again the success ratio -- a binary value that indicates whether the goal was reached during the episode. 
This environment is known to be challenging for standard RL and it cannot be solved without special approaches \cite{plappert2018multi}. 

The plot in Figure \ref{fig:fetch} demonstrates that the max-reward TD3 achieves a goal-reaching policy in both environments, while the standard version, in line with the prior work, fails to learn completely. We believe that this happens due to the difference in bootstrapping mentioned earlier. The environment is complex and multidimensional and the goal-reaching transitions are rare which makes the learning problem really hard for the standard methods. We believe that this experiment shows the great potential of max-reward RL in more realistic goal-reaching environments.

\section{Conclusions and future work}
In this work, we provide a theoretical description of the max-reward reinforcement learning paradigm and verify it experimentally. Our theoretical contributions include a novel formulation of the max-reward value functions and a Bellman-like contraction operator that enables efficient learning. Besides, we prove the policy gradient theorems for max-reward policies and hence enable using the state-of-the-art RL algorithms in the context of max-reward RL.

In the experiments with two robotic environments, we show that max-reward RL works better for sparse reward problems with surrogate dense reward. This result confirm our intuition that maximum reward is a better choice for goal-reaching environments. Moreover, we demonstrate that our max-reward RL, unlike prior work, is also consistent in stochastic environments.

Qualitatively, we believe that the main strengths of the max-reward algorithms can be summarized as follows:
\begin{itemize}
    \item [1.] {In max-reward RL, bootstrapping works differently than in the standard RL. Specifically, it allows for more efficient propagation of reward from the goal states.}
    \item [2.] Max-reward agents are more prone to getting stuck in local optima. Since the maximum reward obtained so far is a part of the extended state space, the agents do not have any incentive to stay in these optima.
    \item [3.] Due to the auxiliary variable $y,$ max-reward value functions are inherently distributional. As reported in prior work \cite{bellemare2017distributional}, learning distributional value functions can have positive impact even in deterministic problems. 
\end{itemize}

In future work, we aim to study how max-reward RL can be combined with the existing methods for automated reward design and explore its potential in other problems. 

\section*{Acknowledgements}
We would like to acknowledge Delft University of Technology for providing the resources and support necessary for this research. The collaborative environment greatly contributed to the development and success of this work. {In particular, we are very thankful to Jinke He for reading the draft of this paper and providing helpful feedback.}

Additionally, we our thank the anonymous reviewers for their thorough and constructive feedback. Their insightful comments and suggestions have significantly improved the quality of this paper.

\section*{Impact Statement}
This paper presents work whose goal is to advance the field of Reinforcement Learning. There are many potential societal consequences of our work, none which we feel must be specifically highlighted here.

\newpage
\newpage

\bibliographystyle{icml2024}
\bibliography{references}

\newpage
\appendix
\onecolumn
\section{Proofs} \label{proofs}
\begin{proof}[Proof of Lemma \ref{lemma:bellman-eq}]
First, we prove the equation for the state value function $\hat{v}^\pi:$
\begin{equation*}
\begin{aligned}
\hat{v}^\pi(s,y)&=
\E_\pi \big[y\lor \hat{G}_t
\big|\substack{s_t=s}\big] = 
\E_\pi\big[y\lor r_{t+1}\lor\gamma\hat{G}_{t+1}
\big|\substack{s_t=s}\big] \\ &=
\gamma\E_\pi \big[y'\lor\hat{G}_{t+1}
\big|\substack{s_t=s}\big] = 
\gamma\E_{\pi}\big[y'\lor y'\lor\hat{G}_{t+1} \big|\substack{s_t=s}\big] \\ &=
\gamma\E_{\substack{a_t\\ s_{t+1}}}\big[y'\lor\hat{v}^\pi(s_{t+1},y') \big|\substack{s_t=s}\big]
\end{aligned}
\end{equation*}
Then, for the state-action value function $\hat{q}^\pi:$
\begin{equation*}
\begin{aligned}
\hat{q}^\pi(s,a,y)&=
\E_\pi \big[y\lor \hat{G}_t
\big|\substack{s_t=s\\a_t=a}\big] = 
\E_\pi\big[y\lor r_{t+1}\lor\gamma\hat{G}_{t+1}
\big|\substack{s_t=s\\a_t=a}\big]
\\ &=
\gamma\E_\pi \big[y'\lor\hat{G}_{t+1}
\big|\substack{s_t=s\\a_t=a}\big] = 
\gamma\E_{\pi}\big[y'\lor y' \lor\hat{G}_{t+1}\big|\substack{s_t=s\\a_t=a}\big]
\\ &=
\gamma\E_{\substack{s_{t+1}\\a_{t+1}}}\big[y'\lor\hat{q}^\pi(s_{t+1}a_{t+1},y') \big|\substack{s_t=s\\a_t=a}\big]
\end{aligned}
\end{equation*}
\end{proof}

\begin{proof}[Proof of Theorem \ref{theorem:contraction}]
First, we demonstrate a simple property of the $\lor$ operator that we will use later. Let $a,x,y\in\R.$ Then, using equation $x\lor y = 0.5(x+y+|x-y|)$, we obtain the following:
\begin{equation}
\label{eq:max-property}
\begin{aligned}
a\lor x - a\lor y & = 
0.5\big(a+x+ |x-a| -a-y-|y-a|\big) 
\\ &=
0.5(x-y+|x-a|-|y-a|) 
\\ & \leq
0.5(x-y +|x-a-(y-a)|)
\\ & =
0.5(x-y+|x-y|) 
\\ & \leq
|x - y|
\end{aligned}
\end{equation}
Now, we can prove that $\hat{\mathcal{T}}^\pi$ is a contraction. We begin with the state-action case.
Let $q, z: \mathcal{S}\times\mathcal{A}\times\R$ be two-real valued functions. Then, we can expand $\|{\hat{\mathcal{T}}}^\pi q-{\hat{\mathcal{T}}}^\pi z\|_\infty$ as follows:
\begin{equation*}
    \begin{aligned}
\|{\hat{\mathcal{T}}}^\pi q-{\hat{\mathcal{T}}}^\pi z\|_\infty & = 
\gamma\sup_{s\in\mathcal{S}, a\in\mathcal{A}, y\in\R } \Big|\E_{\substack{s_{t+1}\\a_{t+1}}}\Big[y'\lor q(s_{t+1},a_{t+1},y') - y'\lor z(s_{t+1},a_{t+1},y')
\big|\substack{s_t=s\\a_t=a}\Big]\Big|  \\ & \leq 
\gamma\sup_{s\in\mathcal{S}, a\in\mathcal{A}, y\in\R }\E_{\substack{s_{t+1}\\a_{t+1}}}\Big[\Big|y'\lor q(s_{t+1},a_{t+1},y') - y'\lor z(s_{t+1},a_{t+1},y')\Big|
\big|\substack{s_t=s\\a_t=a}\Big]|  
\\ & \overset{(\ast)}{\leq}
\gamma\sup_{s_{t+1}\in\mathcal{S}, a_{t+1}\in\mathcal{A}, y'\in\R }
\Big| y'\lor q(s_{t+1},a_{t+1},y') - y'\lor z(s_{t+1},a_{t+1}, y') \Big| 
\\ & = 
\gamma\sup_{s\in\mathcal{S}, a\in\mathcal{A}, y\in\R }
\Big| y\lor q(s,a,y) - y\lor z(s,a, y) \Big| 
\\ & \leq 
\gamma\sup_{s\in\mathcal{S}, a\in\mathcal{A}, y\in\R }\Big|q(s,a,y)-z(s,a,y)\Big| = \|q-z\|_\infty
    \end{aligned}
\end{equation*}
The first inequality follows from the fact that $\Big|\E_x[x]\Big| \leq\E_{x}\Big[|x|\Big]$ and the last inequality follows from Eq.~\eqref{eq:max-property}. In $(\ast)$, we use the following property of the expectation: $\sup_{y}\big\{\E[x|y]\big\}\leq\sup_{x}\{x\}.$
Now, we demonstrate the contraction property for the state value function:
Let $v, u: \mathcal{S}\times\R$ be two-real valued functions. Then, we expand $\|{\hat{\mathcal{T}}}^\pi v-{\hat{\mathcal{T}}}^\pi u\|_\infty$ as follows:
\begin{equation*}
    \begin{aligned}
\|{\hat{\mathcal{T}}}^\pi v-{\hat{\mathcal{T}}}^\pi u\|_\infty & = 
\gamma\sup_{s\in\mathcal{S}, y\in\R } \Big|\E_{\substack{a_t\\s_{t+1}}}\Big[y'\lor v(s_{t+1},y') - y'\lor u(s_{t+1},y')
\big|\substack{s_t=s}\Big]\Big| 
\\ & \leq 
\gamma\sup_{s\in\mathcal{S}, y\in\R }\E_{\substack{a_{t}\\s_{t+1}}}\Big[\Big|y'\lor v(s_{t+1},y') - y'\lor u(s_{t+1},y')\Big|
\big|\substack{s_t=s\\}\Big]  
\\ & \leq
\gamma\sup_{s_{t+1}\in\mathcal{S}, y'\in\R }
\Big| y'\lor v(s_{t+1},y') - y'\lor u(s_{t+1},y') \Big| 
\\ & = 
\gamma\sup_{s\in\mathcal{S}, y\in\R }
\Big| y\lor v(s,y) - y\lor u(s, y) \Big| 
\\ & \leq 
\gamma\sup_{s\in\mathcal{S}, y\in\R }\Big|v(s,y)-u(s,y)\Big| = \|v-u\|_\infty
    \end{aligned}
\end{equation*}
Therefore, the max-reward Bellman operator is a contraction. Hence, by the Banach fixed-point theorem, it has a unique fixed-point(s). From Lemma \ref{lemma:bellman-eq}, we conclude that this is the max-reward value function(s).
\end{proof}

\begin{proof}[Proof of Lemma \ref{theorem:opt-contraction}]
\begin{equation}
    \begin{aligned}
\|{\hat{\mathcal{T}}}^\ast q-{\hat{\mathcal{T}}}^\ast z\|_\infty & = 
\gamma\sup_{s\in\mathcal{S}, a\in\mathcal{A}, y\in\R } \Big|\E_{\substack{s_{t+1}\\}}\Big[y'\lor \max_{a'}q(s_{t+1},a',y') - y'\lor \max_{a'}z(s_{t+1},a',y')
\big|\substack{s_t=s\\a_t=a}\Big]\Big| 
\\ & \leq
\gamma\sup_{s\in\mathcal{S}, a\in\mathcal{A}, y\in\R } \E_{\substack{s_{t+1}\\}}\Big[\Big|y'\lor \max_{a'}q(s_{t+1},a',y') - y'\lor \max_{a'}z(s_{t+1},a',y')\Big|
\big|\substack{s_t=s\\a_t=a}\Big] 
\\ & \leq
\gamma\sup_{s\in\mathcal{S}, a\in\mathcal{A}, y\in\R} \E_{\substack{s_{t+1}\\}}\Big[\max_{a'}\Big|y'\lor q(s_{t+1},a',y') - y'\lor z(s_{t+1},a',y')\Big|
\big|\substack{s_t=s\\a_t=a}\Big] 
\\ & \leq
\gamma\sup_{s_{t+1}\in\mathcal{S}, a_{t+1}\in\mathcal{A}, y'\in\R }
\Big| y'\lor q(s_{t+1},a_{t+1},y') - y'\lor z(s_{t+1},a_{t+1}, y') \Big| 
\\ & = 
\gamma\sup_{s\in\mathcal{S}, a\in\mathcal{A}, y\in\R }
\Big| y\lor q(s,a,y) - y\lor z(s,a, y) \Big| 
\\ & \leq 
\gamma\sup_{s\in\mathcal{S}, a\in\mathcal{A}, y\in\R }\Big|q(s,a,y)-z(s,a,y)\Big| = \|q-z\|_\infty
    \end{aligned}
\end{equation}

\end{proof}

\begin{proof}[Proof of Theorem \ref{th:stochastic-pg}]
First of all, we notice that the max-reward Bellman equation implies another recursive equation for $\hat{v}^\pi$ and $\hat{q}^\pi:$
 \begin{equation}
 \label{eq:simplified-bellman}
    \hat{q}^\pi(s, a, y) = \gamma\E_{\substack{s_{t+1}}} \big[y'\lor\hat{v}^\pi(s_{t+1}, y')\big|\substack{s_t=s}\big]
    =\gamma\E_{\substack{a_t\\s_{t+1}}} \big[\hat{v}^\pi(s_{t+1}, y')\big|\substack{s_t=s\\a_t=a}\big]
\end{equation}
As discussed in the main paper, we use the version with extra $\lor y'$ to enforce boundary conditions. However, we can still use the equation above as it is a property of the max-reward value function.

Before proving the theorem, we introduce simplified notation to improve readability -- for all functions, we use subscripts to denote the input variables. For example, $\hat{v}_t:=\hat{v}^{\hat{\pi}}(s_t, y_t)$.
The proof follows the one for the standard policy gradient theorem. 
We begin by obtaining a recurrent equation for $\nabla_\theta \hat{v}_0:$
\begin{equation*}
    \begin{aligned}
\nabla_\theta\hat{v}_0 & =
\nabla_\theta\big(\int_{a_0}\hat{\pi}_0\hat{q}_0da_0 \big) =
\underbrace{\int_{a_0}(\nabla_\theta\hat{\pi}_0)\hat{q}_0da_0}_{\phi_0} +  \int_{a_0}\hat{\pi}_0(\nabla_\theta q)_0da_0 
\\  & \overset{\text{Eq.\eqref{eq:simplified-bellman}}}{=}
\phi_0 + \gamma\int_{a_0}\hat{\pi}_0\Big(\nabla_\theta 
\int_{s_1,y_1}\hat{p}(s_1,y_1|s_0,y_0,a_0)\hat{v}_1ds_1dy_1\Big)da_0 
\\ & = 
\phi_0 + \gamma \int_{s_1,y_1}\int_{a_0}\hat{\pi}_0\hat{p}(s_1,y_1|s_0,y_0,a_0)(\nabla_\theta\hat{v}_1) ds_1dy_1da_0
\\ & =
\phi_0 + \gamma\int_{s_1,y_1}
\hat{p}_1^{\hat{\pi}}(s_0,y_0, s_1, y_1)
( \nabla_\theta\hat{v}_1) ds_1dy_1 \,,
    \end{aligned} 
\end{equation*}
where we introduced the shorthand 
$\phi_t = \phi(s_t, y_t) = \int_{a} \nabla_\theta \hat\pi(a | s_t, y_t) \, q(s_t, a, y_t) \, da$.
Expanding this recurrence further, we obtain
\begin{equation*}
    \begin{aligned}
\nabla_\theta\hat{v}_0 & =
\sum_{t=0}^\infty \int_{s_t,y_t} \gamma^t \hat{p}_t^{\hat{\pi}}(s_0,y_0, s_t,y_t)\phi(s_t, y_t)ds_tdy_t 
\\ & = 
\int_{s,y}
\Big(\sum_{t=0}^\infty \gamma^t \hat{p}_t^{\hat{\pi}}(s_0,y_0, s,y)\Big)\phi(s, y)ds dy
\\ & \propto
\int_{s,y} \hat{d}^{\hat{\pi}}(s, y|s_0,y_0)\phi(s,y) dsdy 
\\ & = 
\int_{s,y} \hat{d}^{\hat{\pi}}(s, y|s_0,y_0)
\Big(\int_{a}\big(\nabla_\theta\hat{\pi}(a|s,y)\big)\hat{q}(s,a,y)da \Big)dsdy
    \end{aligned} 
\end{equation*}
Above, $ \hat{d}^{\hat{\pi}}(s,y|s_0,y_0)$ is the discounted stationary distribution of $s,y$ for policy $\pi$ given $s_0,y_0$.
Finally, we substitute this formula for $\nabla_\theta\hat{v}_0$ into the definition of $\hat{J}(\theta)$ and conclude the proof:
\begin{equation}
    \begin{aligned}
\nabla_\theta \hat{J}(\theta) & = \int_{s_0,y_0}\hat{p}_0(s_0,y_0)\int_{s,y} \hat{d}^{\hat{\pi}}(s, y|s_0,y_0)
\Big(\int_{a}\big(\nabla_\theta\hat{\pi}(a|s,y)\big)\hat{q}(s,a,y)da \Big)dsdyds_0dy_0
\\ & =
\int_{s,y} \hat{d}^{\hat{\pi}}(s, y)
\Big(\int_{a}\big(\nabla_\theta\hat{\pi}(a|s,y)\big)\hat{q}(s,a,y)da \Big)dsdy
\\ & =
\E_{\substack{s,y\sim \hat{d}^{\hat{\pi}}\\a\sim\hat{\pi}(\cdot|s,y)}}
\big[\hat{q}^{\hat{\pi}}(s,a,y)\nabla_\theta \ln\hat{\pi}(a|s,y)\big]
    \end{aligned}
\end{equation}
\end{proof}

\section{Experimental details} 
\label{exp details}
For all experiments, we used our implementation of TD3 and PPO that we verified on several MuJoco domains. The implementation of the max-reward algorithms is similar to their cumulative versions except for the following differences:
\begin{itemize}
    \item[1.] The input layer of all neural networks has an extra dimension to work with the extended states $(s,y).$
    \item[2.] The output layer of the value networks uses \textit{Tanh} activation and is rescaled to $u\in[0, \bar{R}]$. Then, it is transformed with $ReLU(u-y) + y$ to enforce $\hat{v}^{\pi}(s,y)\geq y.$
\end{itemize}
Hyperparameters of all runs are reported in Tables \ref{tab:maze}-\ref{tab:fetch}.

\begin{table}[h!]
\centering
\begin{tabular}{p{3cm}|p{2cm}|p{2cm}|p{2cm}|p{2cm}|}
    {Parameter} & PPO & PPOMax & TD3 & TD3Max \\
    \hline
    Parallel environments & 16 & 16 & 16 & 16 \\ 
    Discount factor $\gamma$ & 0.99 & 0.999 & 0.99 & 0.995 \\ 
    Learning rate & 3e-4 & 3e-4 & 3e-4 & 3e-4 \\ 
    Lr. annealing & No & No & No & No \\ 
    Entropy weight & 5e-2 & 5e-2 & & \\ 
    Value loss weight & 0.5 & 0.5 & & \\
    Clip coef. & 0.2 & 0.2 & &  \\
    GAE $\lambda$ & 0.95 & 1 & & \\
    Policy update freq. & & & 2 & 2 \\
    Target soft update $\tau$ & & & 0.005 & 0.005 \\
    Expl. noise type & & & pink & pink \\ 
    Expl. noise std & & & 0.7 & 0.7 \\
    Expl. noise clip & & & 0.5 & 0.5 \\
    Target noise scale & & & 0.2 & 0.2 \\ 
    Initial expl. steps & & & 25000 & 25000 \\ 
    Tr. epochs per rollout & 10 & 10 & & \\
    Rollout length & 1024 & 2048 & & \\
    Minibatch size & 32 & 32 & 256 & 256
\end{tabular}
\caption{Hyperparameters for the experiments with Maze environment.}
\label{tab:maze}
\end{table}

\begin{table}[h!]
\centering
\begin{tabular}{p{3cm}|p{2cm}|p{2cm}|}
    {Parameter} & TD3 & TD3Max \\
    \hline
    Parallel environments & 16 & 16 \\ 
    Discount factor $\gamma$ & 0.99 & 0.995 \\ 
    Learning rate & 3e-4 & 3e-4 \\ 
    Lr. annealing & No & No \\ 
    Policy update freq. & 2 & 2 \\
    Target soft update $\tau$ & 0.005 & 0.005 \\
    Expl. noise type & pink & pink \\ 
    Expl. noise std & 0.1 & 0.1 \\
    Expl. noise clip & 0.5 & 0.5 \\
    Target noise scale & 0.2 & 0.2 \\ 
    Initial expl. steps & 25000 & 25000 \\ 
    Minibatch size & 256 & 256
\end{tabular}
\caption{Hyperparameters for the experiments with Fetch environment.}
\label{tab:fetch}
\end{table}

\section{Algorithms}
\label{algos}
\begin{algorithm}[h!]
\begin{algorithmic}[1]
\State Initialize critic networks $\hat{q}_{\phi_1},\hat{q}_{\phi_2}$ and actor network $\mu_\theta$ 
\State Initialize target networks $\phi_1'\gets \phi_1,\,\phi_2'\gets \phi_2,\,\theta'\gets \theta$
\State Initialize replay buffer $\Set{D}$
\For {$episode=1,2,\ldots$}
    \State Initialize $s_0, y_0\sim \hat{p}_0$
    \For {$t=0,1,\ldots,T-1$}
        \State Sample exploration noise $\epsilon_t$
        \State Execute $a_t=\mu(s_t,y_t) + \epsilon_t$  and get  $s_{t+1}, r_{t+1}$
        \State Update $y_{t+1}=(y_t\lor r_{t+1})/{\gamma}$
        \State Save $(s_t, y_t, a_t, s_{t+1}, r_{t+1}, y_{t+1})$ into $\Set{D}$
        \If {initial exploration is over}
            \State Sample a mini-batch of size $N$ from $\Set{D}$
            \State Sample target actions noise $\eta$
            \State $\Tilde{a}\gets\mu_{\theta'}(s',y') + \eta$
            \State $z\gets y'\lor\gamma\min_{i=1,2}\hat{q}_{\phi'_i}(s',\Tilde{a},y')$
            \State Critic loss $L_c=\frac{1}{N}\sum^2_{i=1}(z-\hat{q}_{\phi_i}(s,a,y))^2$ 
            \State Perform gradient update step on $L_c$
            \If {time to update policy}
                \State $L_a\gets \E\big[\hat{q}_{\phi_1}(s,\mu_\theta(s,y),y)\big]$\
                 \State Perform gradient update step on $L_a$
                \State $\phi_i'\gets \tau\phi_i +(1-\tau)\phi'_i,\;i=1,2 $
                \State $\theta'\gets \tau\theta +(1-\tau)\theta $
            \EndIf
        \EndIf
    \EndFor
\EndFor
\caption{Max-reward TD3} 
\label{alg:td3}
\end{algorithmic} 
\end{algorithm}

\begin{algorithm}[h!]
\caption{Max-reward PPO} 
\label{alg:ppo}
\begin{algorithmic}[1]
\State Initialize actor $\hat{\pi}_\theta$ and critic $\hat{v}_\phi$
\For {$iteration=1,2,\ldots$}
     \State Initialize trajectories buffer $\Set{D}$ 
    \For {$actor=1,2,\ldots,N$}
        \State Initialize $s_0, y_0\sim \hat{p}_0$
	\For {$t=0,1,\ldots,T-1$}
            \State Execute $a_t\sim \pi_\theta(\cdot|s_t,y_t)$  and get  $s_{t+1}, r_{t+1}$
             \State Update $y_{t+1}=(y_t\lor r_{t+1})/{\gamma}$
            \State Save $(s_t, y_t, a_t, s_{t+1}, r_{t+1}, y_{t+1})$ into $\Set{D}$
        \EndFor
        \State $\hat{G}_t^n\gets\gamma^n\hat{v}_\phi(s_{t+n},y_{t+n}),\; n=1,\ldots, T-t$
        \State $\hat{G}_t(\lambda)=(1-\lambda)\sum_{n=1}^{T-t}\lambda^{n-1}\hat{G}^n_t$
        \State Compute advantages $\hat{A}_t=\hat{G}_t(\lambda) - \hat{v}_\phi(s_t,y_t)$
    \EndFor
    \For {k=$1,2\ldots K$}
    \State Critic loss: $L_c\gets\frac{1}{T}\sum_{t}(\hat{G}_t^{T-t}-\hat{v}_\phi(s,y))^2$
    \State Actor loss: $L_a\gets L_{PPO}(\pi_\theta, \{\hat{A}_t\}_{t=1}^T)$
    \State Perform gradient update step on $L_a + L_c$
    \EndFor
\EndFor
\end{algorithmic} 
\end{algorithm}

\end{document}